\xpatchcmd{\@todo}{\setkeys{todonotes}{#1}}{\setkeys{todonotes}{inline,#1}}{}{}
  \providecommand\BibTeX{{%
    \normalfont B\kern-0.5em{\scshape i\kern-0.25em b}\kern-0.8em\TeX}}}
\newcommand{\vecX}{\mathbf{x}}
\newcommand{\vecY}{\mathbf{y}}
\newcommand{\vecK}{\mathbf{k}}
\newcommand{\vect}{\boldsymbol{\theta}} % parameters
\newcommand{\vecA}{\boldsymbol{\alpha}}
\newcommand{\bbR}{\mathbb{R}}
\newcommand{\sample}{\mathcal{D}}
\newcommand{\vecT}{\boldsymbol{\Theta}}
\newtheorem{theorem}{Theorem}[section]
\newtheorem{lemma}[theorem]{Lemma}
\def\BibTeX{{\rm B\kern-.05em{\sc i\kern-.025em b}\kern-.08em
    T\kern-.1667em\lower.7ex\hbox{E}\kern-.125emX}}
\begin{document}

\title{Surrogate uncertainty estimation for your time series forecasting black-box: learn when to trust}

\author{\IEEEauthorblockN{1\textsuperscript{st} Leonid Erlygin}
\IEEEauthorblockA{
\textit{Skoltech}\\
Moscow, Russia \\
erlygin.la@phystech.edu
}
\and
\IEEEauthorblockN{2\textsuperscript{st} Vladimir Zholobov}
\IEEEauthorblockA{
\textit{Skoltech}\\
Moscow, Russia \\
zholobov.va@phystech.edu
}
\and
\IEEEauthorblockN{3\textsuperscript{rd} Valeriia Baklanova}
\IEEEauthorblockA{\textit{HSE University} \\
% \textit{name of organization (of Aff.)}\\
Moscow, Russia \\
vbaklanova@hse.ru
}
\and
\IEEEauthorblockN{4\textsuperscript{th} Evgeny Sokolovskiy}
\IEEEauthorblockA{
\textit{Sber}\\
Moscow, Russia \\
Sokolovskiy@sberbank.ru
}
\and
\IEEEauthorblockN{5\textsuperscript{th} Alexey Zaytsev}
\IEEEauthorblockA{\textit{Skoltech; BIMSA} \\
Moscow, Russia; Beijing, China \\
A.Zaytsev@skoltech.ru
}
}

\maketitle

\begin{abstract}
Machine learning models play a vital role in time series forecasting. 
These models, however, often overlook an important element: point uncertainty estimates. 
Incorporating these estimates is crucial for effective risk management, informed model selection, and decision-making.

To address this issue, our research introduces a method for uncertainty estimation. We employ a surrogate Gaussian process regression model. 
It enhances any base regression model with reasonable uncertainty estimates. This approach stands out for its computational efficiency. It only necessitates training one supplementary surrogate and avoids any data-specific assumptions. Furthermore, this method for work requires only the presence of the base model as a black box and its respective training data. 

The effectiveness of our approach is supported by experimental results. Using various time-series forecasting data, we found that our surrogate model-based technique delivers significantly more accurate confidence intervals. These techniques outperform both bootstrap-based and built-in methods in a medium-data regime. This superiority holds across a range of base model types, including a linear regression, ARIMA, gradient boosting and a neural network.
\end{abstract}

\begin{IEEEkeywords}
uncertainty estimation, surrogate models, time-series, Gaussian processes, bootstrap
\end{IEEEkeywords}

\begin{table}[!ht]

    \begin{tabular}{|l|ccc|c|}
        \hline
        \diagbox{UE method}{Base model} & OLS               & ARIMA
                                        & CatBoost          & Average
        \\
        \hline
        Base                       & 3.062             & 2.83
                                        & 3.435             & 3.109
        \\
        Bootstrap                  & 2.825             & \textbf{1.734}
                                        & 2.826             & 2.462
        \\
        Naive Surrogate                 & \underline{2.25}  & 3.064
                                        & \textbf{1.793}     & \underline{2.369}
        \\
        \textbf{Our Surrogate}          & \textbf{1.862}    &\underline{2.372}
                                        & \underline{1.946} & \textbf{2.06}
        \\
        \hline
    \end{tabular}
    \caption{Mean ranks for the Miscalibration area  for Uncertainty estimation (UE): : an UE from a base model (Base), a variant of bootstrap (Bootstrap), a naive approach to construct a surrogate UE (Naive Surrogate) and our approach to construct a surrogate UE (Our Surrogate). We get ranks for problems from TSForecasting benchmark and three types of base models: linear regression (OLS), ARIMA and
        Gradient Boosting (CatBoost) equipped with UEs. 
        A smaller rank means that a method is better, as it is close to a top one on average. The best
        results are highlighted
        with
        \textbf{{bold}}
        font, the second best results are \underline{{underscored}}.
    }
    \label{tab:teaser_table}
\end{table}

\section{Introduction}

\begin{figure*}[htpb]
  \centering
 \subcaptionbox{}{%
    \includegraphics[width=1.2\columnwidth]{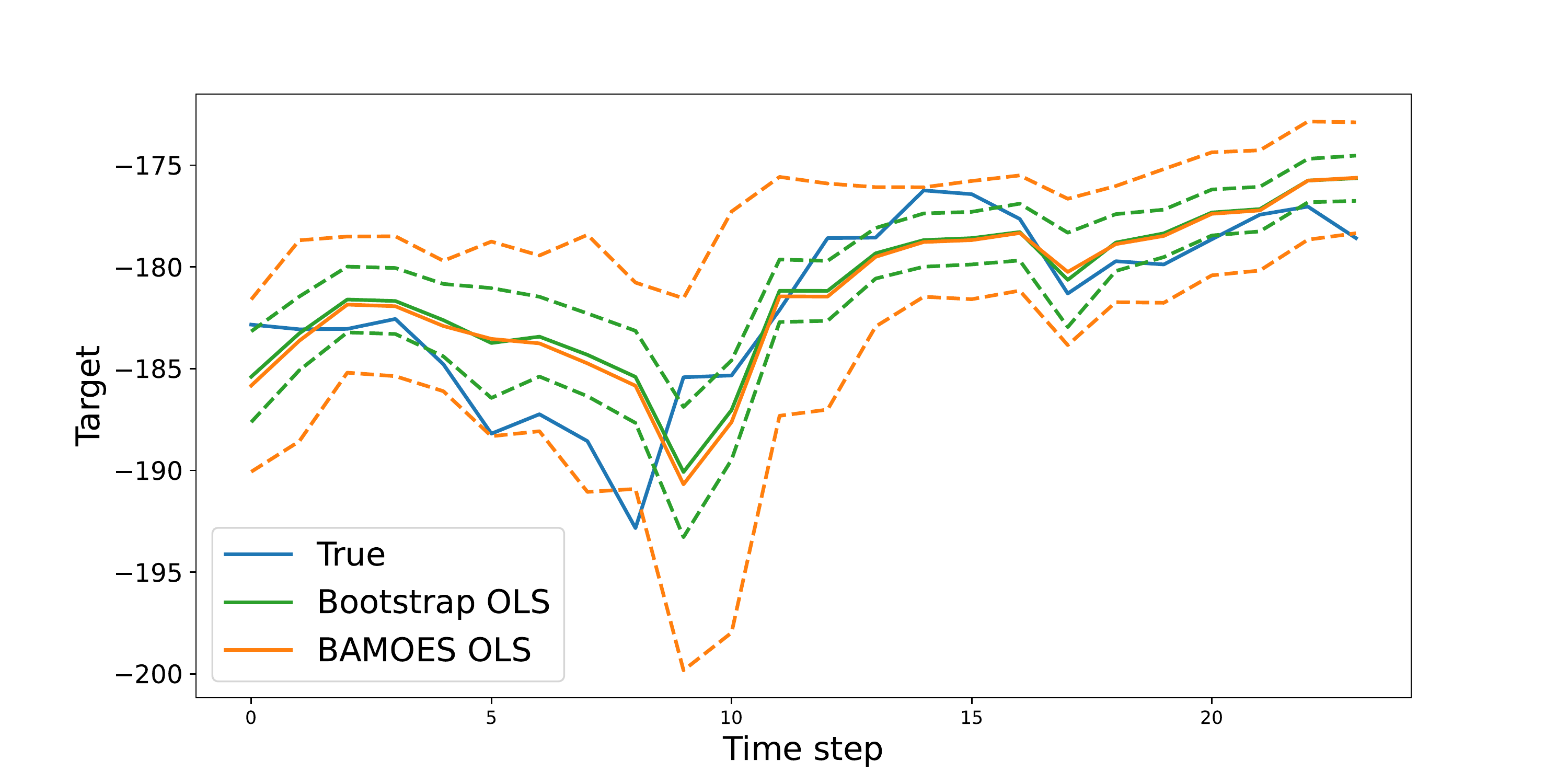}%
  }
  \subcaptionbox{}{%
    \includegraphics[width=0.8\columnwidth]{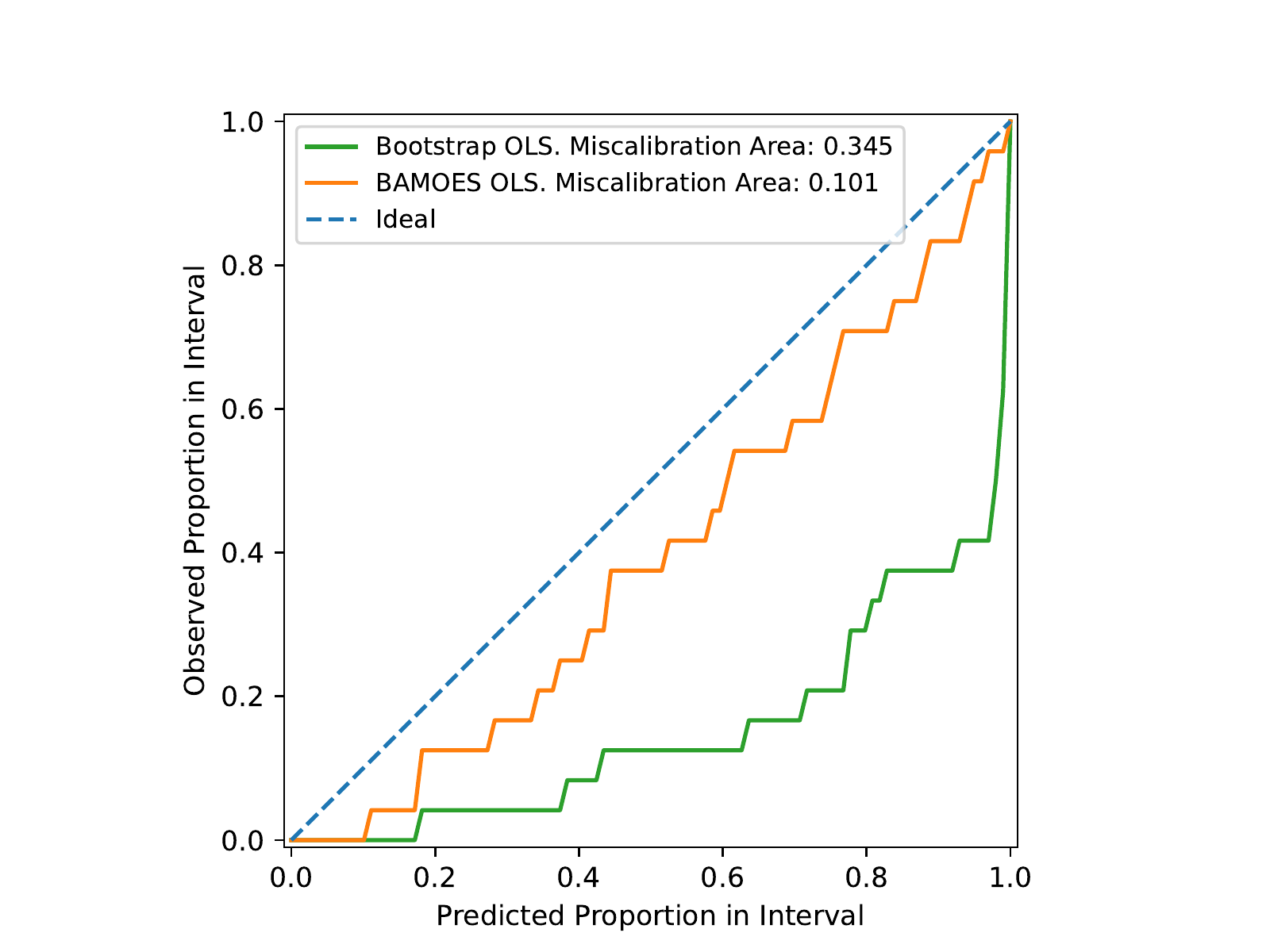}%
  }

  \caption{Example of uncertainty estimation with our method and other methods: the left plot shows obtained predictions and corresponding uncertainty estimates, and the right plot provides insight into the quality of the uncertainty estimation. The description of dataset A is available below.
  (a)
  Uncertainty estimation for our method BAMOES and a Bootstrap method for one-step-ahead time series forecasting problem. Solid lines are model predictions, and dashed lines are $0.95$ confidence intervals. Our approach is more adequate, especially during the sudden change of the true values of the target function. It isn't overconfident and better reflects anomaly near the $10$-th time step.
  (b) Comparison of true and estimated quantiles for the left plot. Our BAMOES method provides better results, being close to the blue dashed diagonal that corresponds to perfect calibration. We see them from curves themselves and from miscalibration areas that we want to minimize.
  }
  \label{fig:teaser_figures}
\end{figure*}

The ability to estimate the uncertainty of the model's predictions is a long-standing problem in machine learning, which has high practical value. It is particularly important in risk-sensitive areas such as medical machine learning~\cite{Kompa2021} and object detection in autonomous driving~\cite{9525313}. 

Among other applications, uncertainty estimation is desirable in time series forecasting.
The problem is crucial for both classic~\cite{lahiri2003resampling} and deep learning models as well~\cite{lim2021time,zhu2017deep}.
For application examples, you can look at share price prediction tasks \cite{2210.17030}:
it is important to detect when macroeconomic change occurs and new data start to come from a different distribution. Accurately estimated confidence intervals could help to detect such changes: confidence intervals are wide on out-of-distribution data. 
% Thus, one should retrain a regression model using new data.

% TODO: more diverse structure of sentences, remove some WE-sentences
We consider uncertainty estimation for a fixed pre-trained deterministic regression model, which we will call the \emph{base model} throughout the paper.
% For most experiments, we consider the time series forecasting problem, but we rarely pay attention the specifics of the data at hand.
The base model accurately predicts target value but lacks an uncertainty estimation for its predictions.
We want to equip it with the ability to produce uncertainty estimates for predictions and corresponding confidence intervals. 
This estimate should add little to the computational expenses for inference of the base model. 

The most natural model-agnostic approach is a bootstrap.
However, it faces two challenges: dependencies in sequential time series and a requirement to train multiple models and to run all of them during inference.
The former problem can be mitigated by taking the dependency structure into account, while you need to make correct assumptions about the dependence structure~\cite{lahiri2003resampling}.
The latter problem can be mitigated with e.g., MCMC dropout approach~\cite{lakshminarayanan2017simple}, reducing the quality of uncertainty estimation~\cite{shelmanov2021certain}. 
Other alternatives require other hard-to-fulfill assumptions and model-specific methods. 

% \todo{rewrite this part}
% \textbf{Our contribution.} 
The alternative proposed in this paper is to train a \textit{surrogate} model on the same input data used to train the base model.
We select Gaussian process regression as a functional class for a surrogate model, as it provides reliable uncertainty estimates~\cite{koziel2013surrogate} and is straightforward to train~\cite{williams2006gaussian}.
If a problem requires representation learning, we can train a surrogate model that takes embeddings as an input or in other way approach deep kernel learning~\cite{wilson2016deep}.
A similar approach was used to provide uncertainty estimates for image classification problems \cite{NEURIPS2020_543e8374}. Their Gaussian process classifier was trained on top of hidden image representations computed by CNN. 

After surrogate model training, we construct a combined model, which uses a base model to make target value prediction and a surrogate model to estimate uncertainty for the prediction of the base model. 
To make uncertainty estimation reliable, we design a loss function that allows the surrogate model to imitate the base model predictions, while keeping the method computationally efficient and avoiding contamination of the training sample with points we are uncertain about. 

We compare the proposed method for uncertainty estimation with bootstrap ensemble methods ~\cite{shao1996bootstrap}, Gaussian Process Regression (GPR)~\cite{williams2006gaussian} and Quantile Regression (QR)~\cite{fasiolo2021fast, salem2020prediction}. 
Our evidence includes experiments with different base models: linear models, ARIMAs~\cite{box2015time}, and gradient boosting~\cite{dorogush2018catboost}.
% and neural networks.
Most of these methods have their own intrinsic ways of uncertainty estimation. See \cite{snyder2001prediction} for ARIMA and  \cite{malinin2020uncertainty,duan2020ngboost} for gradient boosting. %(neural networks are a notable exception, while there are some approach as well~\cite{lakshminarayanan2017simple}).
The experiments show that produced uncertainty estimates with our surrogate approach are more accurate than predictions from even built-in methods designed specifically for these models. 
In this paper, we focus on the time-series forecasting problem as one of the most challenging, as it requires close attention to the structure of the dependence of the data with a convenient tool for the comparison of performance over diverse regression problems.
To enrich the comparison, we thoroughly compare our approach with a wide range of methods based on time-series-specific bootstraps.

An example of our uncertainty estimation in Figure~\ref{fig:teaser_figures} demonstrates that the surrogate uncertainty estimates don't suffer from overconfidence typical for other methods.
Moreover, Table~\ref{tab:teaser_table} shows that this evidence is not anecdotal: it holds for a wide range of datasets and types of base models: our approach to the construction of a surrogate model outperforms basic uncertainty estimates for considered classes of models the best bootstrap-based approach we found and naive training of surrogate uncertainty estimation.

To sum up, our study resulted in the following key findings:
\begin{itemize}
    \item We propose a surrogate uncertainty estimation BAMOES for a black box model. The implementation requires only an existing model functioning as a black box and a training sample.
    The associated loss function is constructed to constrain surrogate model to have similar predictions to black box model, while ensuring that surrogate model accurately models prediction uncertainty.
    The proposed loss function can be efficiently calculated.
    To make the base model and a surrogate model closer, we propose to use a specific design of experiments during the surrogate model training.
    % while ensuring the overall process remains straightforward.
    \item The added computational costs during the training and inference stages are minimal. We prove this specifically for the Gaussian process regression surrogate utilized in the study.
    \item We created a benchmark based on TSForecasting, a collection of datasets designed for testing time series forecasting methodologies. TSForecasting encompasses diverse datasets from various domains, each with unique characteristics.
    \item The quality of uncertainty estimates produced by our method surpasses that of model-specific approaches and time-series-specific bootstrap methods. 
    This finding is consistent across different classes of base black box models, including linear autoregression, gradient boosting, and neural networks. 
    A comparable approach can also be adapted to scenarios where only a training sample or a black box model is available, albeit with a decrease in performance.
\end{itemize}

We conduct experiments for the time series forecasting problem, while we don't use any specific properties of this problem. It is likely, that similar results hold for a more general class of regression problems, while we don't provide evidence in this paper.

% In this case, the base model is retrained on several subsets of the training set. After that, the resulting model is used to estimate the target value's mean and variance for each input data point. A drawback of the bootstrap method lies in the difficulty of selecting a proper sampling strategy for time-series data.

\section{Related work}

\paragraph{Uncertainty types}
The uncertainty of a value is understood as its characteristic, which describes a certain allowable spread of its values and arises due to the inaccuracy of measuring instruments, the inconsistency of the allowed restrictions with the real data and the processes behind them, as well as with the approximations contained in the model itself. In machine learning, we aim at uncertainty estimation of a point prediction for a model: how confident a model is about its prediction at a particular point. There are two commonly considered types of uncertainty in machine learning: aleatoric uncertainty and epistemic uncertainty~\cite{roy2011comprehensive}.
A typical approach pays attention to both types.

Aleatoric uncertainty~\cite{abdar2021review} is related to the probabilistic nature of the data and the impossibility of overcoming it. The simplest example is an error in the data received by a measuring device with a given error. We may say that such a scatter occurs by chance and may not be eliminated. On the other hand, we may determine its characteristics using, for example, methods for building a model with inhomogeneous heteroscedastic noise.

Epistemic uncertainty~\cite{abdar2021review} is related to the limitations of the model used for forecasting. It arises due to the inaccuracy of the approximations embedded in the model or as a result of applying the model to new data that differ from those used in its construction. 
Such uncertainty may be reduced, for example, by improving the model or by using a more correct data set to train it.

Below we consider primal ideas for uncertainty estimation in regression problems. 
We start with model-agnostic ensemble methods and quantile regression.
Then we consider some model-specific approaches and argue that they can be connected within the surrogate modeling framework.

% TODO
% There are many methods to estimate uncertainty of a model
% Two main approach in existing literature are usage of an ensemble or training a separate head of a network.
% A separate head is TODO (ScaleFace, PFE)

\paragraph{Ensemble approach for uncertainty quantification}

One of the approaches to obtain estimates of uncertainty is the construction of an ensemble of similar, but different in some nuances, models. At the same time, to obtain estimates of uncertainty, the spread of different model predictions is considered~\cite{lakshminarayanan2017simple,liu2019accurate}. For example, an estimate of the forecast variance at a point may be its empirical estimate from the forecast vector of an ensemble of models.

The most well-known approach of constructing an ensemble of models~--- using of bootstrap~\cite{shao1996bootstrap}. During bootstrapping, objects for training are sampled from the training set with repetitions. The resulting sample is used to train models from ensembles. With a reasonable choice of the number of models in the ensemble, this method allows one to obtain fairly accurate statistical estimates of the data.

However, such a procedure, in its basic form, considers data as a set of independent objects. Here we consider sequential data in which there is a temporal connection. There are sampling methods that extend the basic version of the bootstrap, designed specifically for working with time series and sequential data models. In particular, a block bootstrap~\cite{kunsch1989jacknife, politis1994stationary, paparoditis2001tapered} or auto-regressive data sampling~\cite{lahiri2003resampling} is used.

Recent work considers obtaining uncertainty estimates using ensembles of deep models~\cite{lakshminarayanan2017simple}. Due to the significant number of parameters available in neural networks, the slightest changes in the initialization of the model before the training lead to changes in the trained model. Moreover, estimates of the mean and scatter for each predicted point may be obtained. However, for classical machine learning models with a small number of parameters applied to a small amount of data, this approach could be better applicable since it is more difficult to achieve a variety of outputs.

\paragraph{Quantile regression.}
In quantile regression, one trains a separate model to predict the quantiles of the predictive distribution~\cite{fasiolo2021fast, salem2020prediction}.
They are preferable due to their opportunity to represent any arbitrary distribution, which varies across methods. 
Using these quantiles, we can estimate the uncertainty of a model at a point.
One of the base approaches is the usage of a pinball loss.
For example, SQR trains a separate neural network using this loss~\cite{tagasovska2019single} by trying to provide correct predictive intervals. 
The pinball loss is a tilted transformation of the absolute value function that restricts the ability to target many desirable properties (e.g., sharpness and calibration). 
As a result, SQRv2~\cite{chung2021beyond} considers different types of losses to solve that limitation presenting a current SOTA in this direction. 

\paragraph{Gaussian process approach for Uncertainty Quantification}
Sometimes there is a situation when the test set has a different distribution than the train set. This problem is known as the out-of-distribution (OOD) problem. Therefore, a model is needed to detect this and give a uniform distribution to classes.

The Gaussian Process Regression (GPR)~\cite{williams2006gaussian} model has this property. It works well even in the case of a misspecified model~\cite{zaytsev2018interpolation}. It is well-known that the GPR model was introduced as a fully probabilistic substitute for the multilayer perceptron (MLP)~\cite{neal2012bayesian}: a GPR is an MLP with infinite units in the hidden layer. Traditional GPR models have been extended to more expressive variants, such as Deep Gaussian Process~\cite{damianou2013deep}.

There are quite a lot of studies on GPR properties. For instance, it is a case where we have the misspecified problem statement~\cite{van2011information} for GPR. In~\cite{zaytsev2018interpolation}, they obtain the exact expression for interpolation error in the misspecified case for stationary Gaussian process, using an infinite-grid design of experiments. This setup is correct because it does not significantly affect the results~\cite{zaytsev2017minimax}.

In applications, GP-based models can solve various problems including classification and time-series forecasting~\cite{roberts2013gaussian}. Specifically, the article~\cite{gutjahr2012sparse} shows that GPR can be applied for time-series forecasting even for a large data scenario. In this case, a sparse variation is adopted for multiple-step-ahead forecasting.

However, there are some issues with high-dimensional problems: feature extraction is crucial for all kernel methods. To solve this problem, there was proposed a solution~\cite{NEURIPS2020_543e8374}, using spectral normalization to the weights in each layer~\cite{miyato2018spectral}. 

% TODO UQ via GP paragraph

\paragraph{Surrogate models}
The use of an ensemble of models is justified from a theoretical point of view, but there are limitations that do not allow it to be fully applied as a universal way of estimating uncertainty. 
It is the ambiguity of solving the problem of choosing a sampling procedure for constructing an ensemble of models. Moreover, that type of method is computationally expensive: it is necessary to build a large number of models, which is not always possible for both machine learning models and less computationally efficient deep learning models.

Therefore, an alternative approach based on surrogate modelling is used. A surrogate model or meta-model $\tilde{f}(x)$ is created for a model $\hat{f}(x)$.
Due to the procedure for constructing such a model, we assume $\tilde{f}(x) \approx \hat{f}(x)$~\cite{mi2022training}.

Due to the adequacy of the quality of the model and the estimation of uncertainty, it seems natural to use regression based on Gaussian processes as a surrogate model for the original one. A similar approach was used to improve active learning~\cite{tsymbalov2019deeper}.

\paragraph{Research gap}
While there are many methods that aim at uncertainty estimation for machine learning models, there are little attention to how one can equip an already constructed machine learning model with a reasonable uncertainty estimate. 
The closest possible scenario is training of a separate head for a neural network. 
However, such an approach is model-specific.
We aim to close this research gap with introduction of a procedure to create a lightweight uncertainty estimation for a black box base model, if the training data are available.

\section{Methods}

\subsection{Problem formulation}
\label{sec:problem}

Let us have a training dataset $\mathcal{D} = \{(\vecX_i, y_i)\}_{i = 1}^N$, where $\vecX_i$ is an input data sample from the domain $\mathcal{X} \subseteq \bbR^d$, and $y_i$ is a corresponding target from the domain $\mathcal{Y} \subseteq \bbR$. 
The training dataset is sampled from a joint distribution of inputs and targets 
$p(\vecX, y)$. 
We also make standard Bayesian assumptions for the data generation process: firstly, parameters $\vect \in \vecT$ of a function $f_{\vect}: \mathcal{X} \rightarrow \mathcal{Y}$ are sampled, then $y$ is sampled from the conditional distribution $p(y | \vecX, \vect)$. 
For a regression problem, the conditional distribution can be Gaussian: $p(y | \vecX, \vect) = \mathcal{N}(y; f_{\vect} (\vecX), \sigma^2_{\vect} (\vecX))$. 

Given these assumptions, the predictive distribution can be written as:
\begin{equation}
    p(y | \vecX, \sample) = \int_{\vect} p(y | \vecX, \vect) p(\vect | \sample) d \vect,
\end{equation}
where $p(y | \vecX, \vect)$ is the target density given parameters $\vect$ and a point $\vecX$, $p(\vect | \sample)$ is the posterior distribution of parameters $\vect$. 

In practice, the predictive distribution is rarely tractable, so many methods use point estimates: $ p(y | \vecX, \sample) \approx p(y | \vecX, \vect)$, where $\vect$ can be e.g., a maximum likelihood estimate, substituting $p(\vect | \sample)$ with a delta-function.
In this case, we treat the mean value of this distribution $\hat{f}(\vecX)$ as the prediction of the model.

We can formulate our problem in two ways.
The first goal is an accurate estimation of the variance $\hat{\sigma}^2(\vecX)$ of the distribution $p(y | \vecX, \sample)$ at a point $\vecX$ quantifying the uncertainty about the model prediction.
The second goal is the estimation of the shortest confidence interval of the significance level $\alpha$ such that the true value fall into this interval with probability $\alpha$, and the interval is the shortest among all such intervals.
If $p(y | \vecX, \vect)$ is Gaussian, two formulations coincide, as the confidence interval for the prediction with probability $\alpha$ can be written as 
\begin{equation}
% check TODO if we need this label
\label{eq:ci}
 \hat{\mathrm{CI}}_{\alpha}(\vecX) = [\hat{f}(\vecX) - z_{\alpha / 2} \hat{\sigma}(\vecX), \hat{f}(\vecX) + z_{\alpha / 2} \hat{\sigma}(\vecX)],  
\end{equation}
where $z_{\alpha / 2}$ is the $\alpha / 2$ quantile of the standard normal distribution.

If we have a probabilistic model, these problems admit reasonable solutions.
However, in many cases, we have only a black-box deterministic \emph{base} model $\hat{f}(\vecX)$.
So, our final goal is to equip a deterministic regression model with accurate uncertainty estimation.

Below we propose a surrogate modeling approach, which can be used to estimate the uncertainty of a deterministic base model.

\subsection{Naive surrogate based on Gaussian process regression}
\label{surrogate}

We present a universal and numerically efficient approach that requires no assumptions about a model and can equip any black-box model with an uncertainty estimate.
Given a deterministic \emph{base} model $\hat{f}(\vecX)$, we introduce another \emph{surrogate} model $s(\vecX)$.
We select $s(\vecX)$ such that it directly models the target distribution $p(y | \vecX, \sample)$ while mimicking the black-box model predictions from $\hat{f}(\vecX)$.

A natural choice for such a probabilistic model is the Gaussian process regression (GPR)~\cite{williams2006gaussian} as it has superior performance on several tasks  that requires uncertainty estimation such as Bayesian optimization and active learning~\cite{WANG2014167}.  
% produces the Gaussian distribution $p(y| \vecX, \vect)$. 

%We train surrogate GPR model on original training dataset $\mathcal{D}$ if domain $\mathcal{X}$ on input data $\vecX$.

Intuitively, the surrogate model trained on the same dataset approximates base model $s \approx \hat{f}$, which itself approximates underlying distribution $\hat{f}(\vecX) \approx \mathbb{E}_{ p(y|\vecX, \sample)} y$.
Thus, we expect it to have adequate uncertainty estimates.
For GPR, the uncertainty estimate will have a natural kernel-style behavior: the uncertainty increases as we go away from the points from the initial sample used to train the base model.
% Also, we can try to use available black-box $\hat{f}$ to create a training sample for $s$ such that the surrogate would be closer to the base model.

 % We note, that if the input dimension is sufficiently low, we can train GP using the initial $\vecX$ as input. 
 % If the data modality requires representation learning, we use embeddings from the base model to train a GPR model with deep representation-based kernel.

\begin{figure}[ht]
    \centering
    \includegraphics[width=0.45\textwidth]{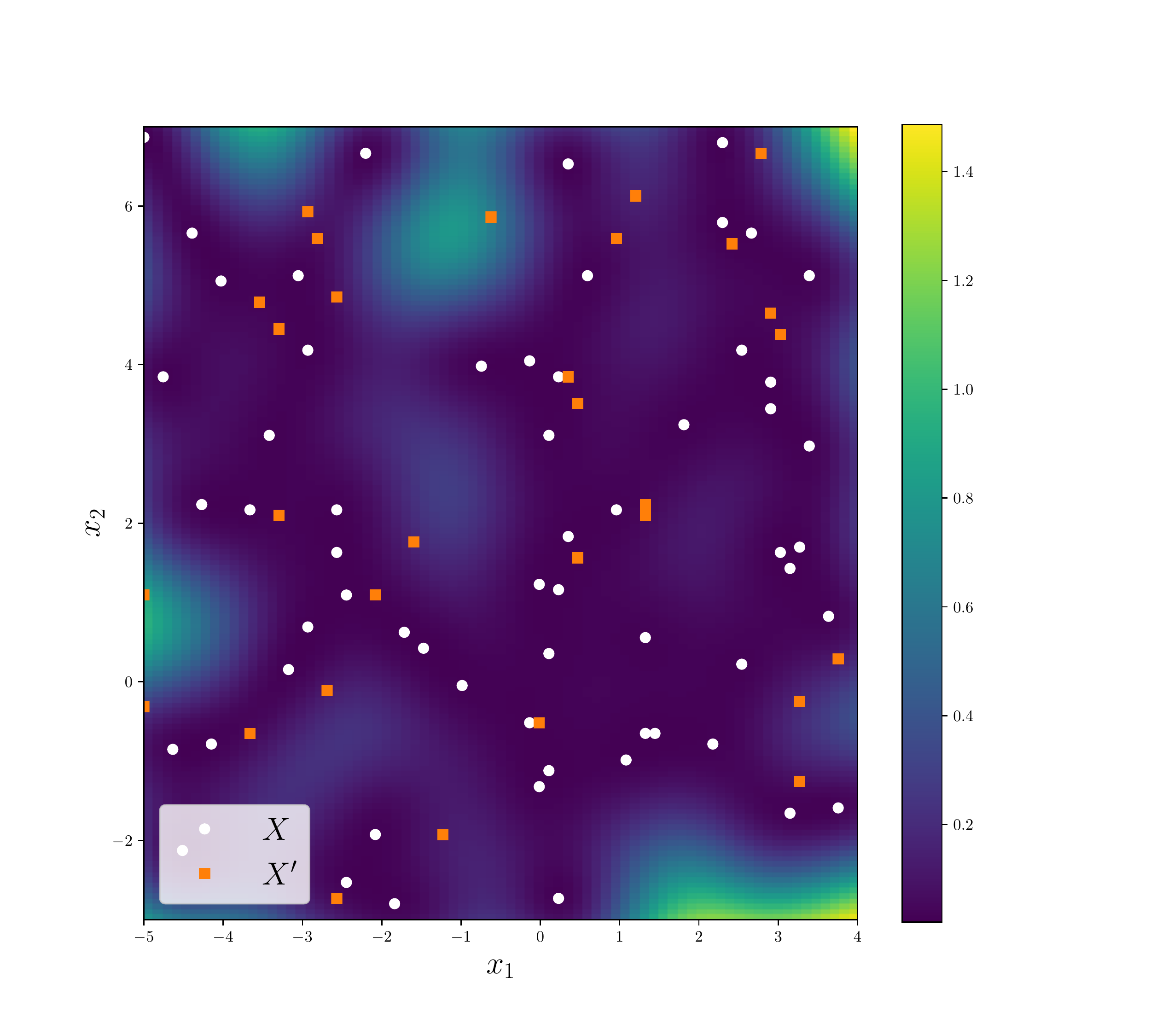}
    \caption{Example of Uncertainty estimation for two-dimensional input via a matching surrogate model. The variance estimate corresponds to the fill color at the point. At points from the initial training sample $X$, the uncertainty is almost zero, while for points from the additional sample $X'$, it takes reasonable values reflecting our absence of knowledge about the true function values at these locations.}
% TODO fix the model name
    \label{fig:uq_example_2d}
\end{figure}
% https://colab.research.google.com/drive/174CIIF9ijxerqN0kEe95JJ83RFEFnz7u?usp=sharing

\subsection{Base-model-enhanced surrogate model (BAMOES)}

Using the approach described above as it is to obtain a surrogate model is naive, as we expect that if the training sample is the same, then the trained model would be the same in cases of the base model and a surrogate model.
Numerous pieces of evidence suggest that this is not true, and we can get different models, even if we use the same dataset and the same class of models.
% TODO cite something
%The difference would be even more severe if we consider a neural network as a base model and a Gaussian process regression as a surrogate. 

On the other hand, the base model is available as a black box, so one can query it at the points of interest, improving the surrogate model by showing it more relevant training data.
For Gaussian process regression, we can use this information efficiently.

We assume that the surrogate model $s(\vecX)$ is a realization of the Gaussian process.
More precisely, $s(\vecX) \sim GP(\mu, k(\vecX, \vecX') | \mathcal{D})$ for a covariance function $k(\vecX, \vecX')$ from a parametric family, a learnable $\mu$. The realizations are conditioned on the available data $\sample$. 
The conditional mean $m(\vecX)$ and variance $\sigma^2(\vecX)$ given the sample of observations $\mathcal{D}$ at a new point $\vecX$ in this case have the following form:
\begin{align*}
    m(\vecX) &= \vecK^T K^{-1} \vecY = \vecK^T \vecA, \vecA = K^{-1} \vecY, \\
    \sigma^2(\vecX) &= k(\vecX, \vecX) - \vecK^T K^{-1} \vecK,
\end{align*}
where $\vecK = \{k(\vecX, \vecX_i)\}_{i = 1}^N$, $K = \{k(\vecX_i, \vecX_j)\}_{i,j = 1}^N$.
To find the parameters of the covariance function, one maximizes the likelihood of the data given the covariance function.

In our case, we have an additional requirement to match the base model $\hat{f}(\vecX)$.
So, the loss function is the sum of two terms:
\begin{align}
\label{eq:gp_loss}
L(s, \hat{f}, \sample, \sample^{\hat{f}}) = 
& -(1 - C) \log p(\vecY| X, s) + \\
& + C \sum_{i = 1}^{L} (s(\vecX_i') - \hat{f}(\vecX_i'))^2,  \nonumber   
\end{align}
where $\log p(\vecY| X, s)$ is the data log-likelihood for the parameters of $s$ and $C \in [0, 1]$ is the weight coefficient.
The second term is the sum of the squared difference between the surrogate model prediction $s(\vecX_i')$ and the base model $\hat{f}(\vecX_i')$ for a sample $\sample^{\hat{f}} = (X', \mathbf{\hat{f}}') = \{(\vecX'_i, \hat{f}(\vecX_i'))\}_{i = 1}^L$.
In our experiments, inputs in $\sample^{\hat{f}}$ are selected uniformly randomly over the domain of interest.
So, to calculate the loss, we need only a training sample and a base model available as a black box.
% The pseudo-code for the proposed method that we name \emph{Base-model enhanced surrogate} in PyTorch style is available in Appendix~\ref{sec:pseudocode}.

In formulated optimization objective additional points from $\sample^{\hat{f}}$ are used to adjust the surrogate model parameters to better match prediction of base model.
We call surrogate model trained with this objective \textit{Base-model-enhanced surrogate model} (BAMOES).

In the terminology of sparse Gaussian process regression, points from $\mathcal{D}$ are \emph{inducing} points~\cite{quinonero2005unifying}, as we condition our distribution on them.
In this way, we (a) keep the computational complexity low and (b) uncertainty estimation growing if we go away from the initial training sample.

Let us put these two statements formally.
\begin{lemma}
The computational complexity for the evaluation of the loss function~\eqref{eq:gp_loss} equals $O(N^3) + O(L N)$.
\end{lemma}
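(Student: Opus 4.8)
The plan is to decompose the loss function~\eqref{eq:gp_loss} into its two constituent terms and bound the cost of each separately, then combine them. The first term, $-(1-C)\log p(\vecY \mid X, s)$, is the standard Gaussian process log-marginal-likelihood evaluated on the inducing sample $\sample$ of size $N$. The second term, $C\sum_{i=1}^{L}(s(\vecX_i') - \hat{f}(\vecX_i'))^2$, requires evaluating the conditional mean $m(\vecX_i')$ of the surrogate at each of the $L$ points in $\sample^{\hat{f}}$, since $s(\vecX_i')$ denotes the GP posterior mean there.

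First I would analyze the log-likelihood term. For a Gaussian process conditioned on $N$ observations, the log-marginal-likelihood has the familiar closed form involving $\tfrac12 \vecY^T K^{-1} \vecY$, $\tfrac12 \log \det K$, and a constant, where $K = \{k(\vecX_i,\vecX_j)\}_{i,j=1}^N$ is the $N \times N$ kernel matrix. Both the quadratic form and the log-determinant are dominated by the cost of factorizing $K$ (e.g.\ a Cholesky decomposition), which is $O(N^3)$; once the factorization is available, solving for $K^{-1}\vecY$ and reading off $\log\det K$ from the diagonal are lower-order. This establishes the $O(N^3)$ contribution. Crucially, I would note that the vector $\vecA = K^{-1}\vecY$ computed here can be \emph{reused} for the second term, so no additional cubic work is incurred downstream.

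Next I would analyze the matching term. Using the posterior-mean formula $m(\vecX) = \vecK^T \vecA$ stated above, each evaluation $s(\vecX_i') = m(\vecX_i')$ requires forming the cross-covariance vector $\vecK = \{k(\vecX_i', \vecX_j)\}_{j=1}^N$, which costs $O(N)$ kernel evaluations, followed by an inner product $\vecK^T \vecA$ of two length-$N$ vectors, also $O(N)$. Summed over the $L$ points in $\sample^{\hat{f}}$, the squared differences against the already-available $\hat{f}(\vecX_i')$ values add $O(1)$ each, so the total for the second term is $O(LN)$. Adding the two contributions gives $O(N^3) + O(LN)$ as claimed.

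The main subtlety—rather than a genuine obstacle—is justifying that $\vecA$ need only be computed once and that the precomputed factorization of $K$ suffices for both terms, so that the $O(LN)$ work does not secretly hide another $O(N^3)$ matrix inversion per query point. I would make explicit that the inducing set is exactly the fixed training sample $\sample$, so $K$ and its factorization are computed a single time and shared across all $L$ conditional-mean evaluations. A secondary point worth stating is that computing only the posterior \emph{mean} (not the variance $\sigma^2(\vecX_i')$) for the matching term avoids the extra $O(N^2)$-per-point cost that a full posterior-variance evaluation would require, keeping the second term strictly linear in $N$.
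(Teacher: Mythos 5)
Your proof is correct and follows essentially the same route as the paper's: decompose the loss into the log-likelihood term (dominated by the $O(N^3)$ factorization of the $N \times N$ kernel matrix $K$) and the matching term (evaluating the posterior mean $\vecK^T \vecA$ at the $L$ query points for $O(LN)$, reusing $\vecA = K^{-1}\vecY$). Your added remarks on reusing the factorization and on needing only the posterior mean, not the variance, make explicit points the paper leaves implicit, but the argument is the same.
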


\begin{proof}
Using the formula for the likelihood of Gaussian process regression, we get 
\begin{align*}
&L(s, \hat{f}, \mathcal{D}, \mathcal{D}^{\hat{f}}) = \frac{(1 - C)}{2} \left(N \log \pi + \log \det |K| + \vecY^T \vecA \right) + \\
&+ C \left(K_{X'\, X} \vecA - \hat{f}(X') \right)^T \left(K_{X'\, X} \vecA - \hat{f}(X') \right),    
\end{align*}
where $\hat{f}(X') = \{ \hat{f}(\vecX_1'), \ldots, \hat{f}(\vecX_{L}')\}$ and $K_{X'\, X} = \{k(\vecX'_i, \vecX_j)\}_{i, j = 1}^{L, N}$.

So, we need to evaluate two terms: the likelihood and the squared loss.
To calculate the likelihood, we need $O(N^3)$, as we need the inverse and the determinant of the covariance matrix of size $N \times N$.
Note that if we have the inverse, we calculate $\boldsymbol{\alpha} = K^{-1} \vecY$ in $O(N^2)$.
So, to get the predictions, we need $O(L N)$ additional operations in addition.
Summing both complexities, we obtain the desired $O(N^3) + O(L N)$.
\end{proof}
So, as long as we keep $L$ of an order of magnitude similar to $N$, we have little additional computational power required.
Moreover, we can afford $L$ to be of order $N^2$, which is impossible with the naive baseline above.

The second statement about the behavior of uncertainty as we go away from a training point is natural. 
In points from $X$, a standard property for Gaussian process regression holds and the uncertainty is equal to the noise variance in the data.
Typically, the estimation of noise variance is small.
If we move $\vecX$ to infinity, then the uncertainty estimates $\sigma^2(\vecX)$ is $k(\vecX, \vecX)$ for any reasonable covariance function, whatever $\sample^{\hat{f}}$ were used, as the components of the covariance vector $k(\vecX, \vecX')$ goes to zero.
For intermediate points like those from $X'$ the estimate uncertainty is higher than for the points from $X$.
This behaviour in general corresponds to our idea on how uncertainty for $\hat{f}(\vecX)$ should look like.
An example of the application of our approach is presented in Figure~\ref{fig:uq_example_2d}.

\paragraph{Surrogate-model-aware inference}
After the surrogate model is trained, we use the following combined model: the point target values predictions come from base model $\hat{f}(\vecX)$, and the variance $\hat{\sigma}^2(\vecX) = \sigma^2(\vecX)$ from the surrogate model.
We assume that the distribution of the output is Gaussian and can use the formula~\eqref{eq:ci} to produce confidence intervals if required.
The added computational complexity of our approach during inference is the evaluation cost for the surrogate model variance.

\section{Experiments}

% In Section~\ref{sec:metrics} we define metrics used to evaluate regression model accuracy, as well as metrics which measure uncertainty estimation quality.

% TODO
We structure the experiments section in the following way. 
Subsection~\ref{sec:dataset} presents the used time series forecasting benchmark.
Subsection~\ref{sec:metrics} introduced used quality metrics.
Then we describe the main comparison of our approach with others for different types of base models.
The subsection on ablation study concludes this section.
% Not all experiments are made to the main text of the paper, so for more detailed experiments, we refer an interesting leader to Appendix~\ref{sec:add_experiments}.
In all tables below best results are highlighted with \textbf{{bold}} font, second best results are \underline{{underscored}}.

\subsection{Datasets}
\label{sec:dataset}

\paragraph{Time series forecasting data FD benchmark}
One of the largest benchmarks for predicting one-dimensional time series is the Monash time series forecasting archive (TSForecasting)~\cite{godahewa2021monash}. 
It contains $26$ publicly available time series datasets from different applied domains with equal and variable lengths. 
The goal for each dataset is time series forecasting for a specific time horizon $h$. 
The data cover nine diverse areas: tourism, banking, Internet, energy, transport, economy, sales, health, and nature.

Some sets repeat in slightly different versions: the frequency of the time series considered in them changes (day, month, quarter, or year), or missing values are included and excluded. Because of this, the total number of sample options reaches $50$. 
    
Our choice of TSForecasting is motivated by its diversity, enabling us to tackle a broad range of tasks. Typically, a single sample encompasses an extensive array of time series, averaging about $2600$ per sample. Additionally, the dataset includes six extraordinarily long time series, two of which exceed a length of seven million. 

Original paper~\cite{godahewa2021monash} provides a detailed description of each dataset. All the metadata for building the model (prediction horizon, context length, periodicity, etc.) follow the settings from this project. For some datasets in~\cite{godahewa2021monash}, there was no metadata. We excluded such datasets from consideration. Therefore, the number of datasets has decreased to $19$ datasets with one-dimensional data and one dataset with single multi-dimensional time series. From each dataset with one-dimensional data, the first two time series were taken. 

For one-dimensional time series from TSForecasting with target values only, we use $k$ lags as features. For example, $i$-th point has label $y$ and features $\vecX$:
\[
    y = y_i, \quad \vecX = (y_{i - 1}, \ldots, y_{i - k}).
\]
For multi-dimensional time series, we use provided features.

We split time series into train and test data using test data with the size of $(h + k)$ for one-dimensional time series and $h$ for multi-dimensional time series. Moreover, to speed up computations, only time series of length $\max(2 \cdot lag, 200)$ are used. 
Overall we test on $30$ datasets with $2$ time series taken from each of them.
%, and only $2$ time series in each dataset are used.

For additional experiments we consider a dataset for the time series prediction connected to the financial industry.
It has a clear out-of-distribution parts related to the changes caused by COVID-19 and other crises, so the uncertainty estimation in this case is desirable.
Moreover, a subsidiary challenge comes from the small number of points available for training.

% \paragraph{Financial data}
% As concrete applied examples, we consider two datasets for the time series prediction connected to the financial industry.
% We can't disclose the names of the targets and name them Dataset A and Dataset B.
% The target values are modified to hide their true value and scale, while these changes don't affect the obtained metrics.

% Both datasets have clear out-of-distribution parts related to the changes caused by COVID-19 and other crises, so they can be used to evaluate the uncertainty estimation for time series.
% Moreover, they have another challenge related to the small number of points available for training.
% Consumer Loans dataset - Dataset A
% short loans dataset = Dataset A
% Mortgages dataset - Dataset B

\subsection{Evaluation details}
\label{sec:metrics}

To provide a multi-faceted evaluation, the results include values of various quality metrics for uncertainty quantification. 
We present values of the Root Mean Square Calibration Error (RMSCE)\cite{chung2021uncertainty}, miscalibration area \cite{tran2020methods} and Expected Normalized Calibration Error (ENCE)\cite{ence} in the main text.
Alongside with calibration metrics, we also compute regression metric, Root Mean Squared Error (RMSE).
% Additional metrics and related discussion are presented in the appendix~\ref{sec:metrics_app}.

We use critical difference (CD) diagrams~\cite{demvsar2006statistical} to compare methods for a selected metric via implementation from~\cite{IsmailFawaz2018deep}. 
The vertical lines in the diagram correspond to the mean rank of a method over a range of considered datasets or problems.
A thick horizontal line unites a group of models that are not significantly different in terms of the metric value.

\subsection{Considered uncertainty estimates}
\label{sec:considered_models}

% describe all considered models
The complexity of the uncertainty estimation problem leads to diverse solutions specific to different approaches.
Our question is how we can improve the intrinsic and model-agnostic methods for different classes of base models.
We consider ordinary least squares (OLS),
%Gaussian process regression (GPR),
CatBoost (a realization of Gradient boosting equipped with uncertainty estimate), ARIMA, and QR approaches - using pinball loss (SQR PL) and interval score (SQR CL)~\cite{chung2021beyond}.
For all methods, the hyperparameters are default.
%(for QR approaches, we use hyperparameters from origin~\cite{chung2021beyond}).

\begin{table}[H]
\scriptsize
\caption{Ranks of calibration metrics of all best models aggregated over Forecasting data.
Here our surrogate models use OLS as base model. Best results are highlighted with \textbf{{bold}} font, second best results are \underline{{underscored}}
}
\label{tab:rank_all_best}
\begin{adjustbox}{width=0.5\textwidth}
\begin{tabular}{ccccc}
\toprule
\begin{tabular}{c}
Model
\end{tabular}&
\begin{tabular}{c}
RMSE $\downarrow$
\end{tabular}&
\begin{tabular}{c}
Miscal.\\
Area $\downarrow$
\end{tabular}&
\begin{tabular}{c}
RMSCE $\downarrow$
\end{tabular}&
\begin{tabular}{c}
ENCE $\downarrow$
\end{tabular}\\
\midrule
OLS & \underline{14.987} &  27.474 &  27.408 &  31.289 \\
BSAP BS OLS & \textbf{14.658} &  23.724 &  23.974 &  32.329 \\
Naive BS OLS &  16.724 &  26.671 &  26.645 &  29.658 \\
SQR PL &  17.658 &  20.908 &  20.882 &  19.763 \\
SQR CL &  17.75 &  18.132 &  18.224 &  21.329 \\
ARIMA &  35.276 &  31.513 &  31.539 &  23.961 \\
BSAP BS ARIMA &  17.75 &  19.408 &  19.276 &  21.026 \\
Naive BS ARIMA &  18.303 &  23.066 &  23.0 &  26.684 \\
Naive surrogate & \underline{14.987} & \underline{13.513} & \underline{13.039} & \textbf{13.0} \\
GPR &  16.882 &  15.75 &  15.605 &  14.053 \\
BAMOES & \underline{14.987} &  13.816 &  13.658 & \underline{13.053} \\
BAMOES RBF kernel & \underline{14.987} & \textbf{12.829} & \textbf{12.724} &  14.237 \\
\bottomrule
\end{tabular}
\end{adjustbox}
\end{table}

For all base models, we consider different ways to construct uncertainty estimates. 
We start with built-in approaches for each base model.
OLS and ARIMA models obtain uncertainty estimates based on a Bayesian assumption about the model parameters. 
The gradient boosting model uses auxiliary models that minimize the quantile loss.

We compare the built-in approaches with alternatives that use the base model as a black box or can train it in case of bootstraps.
Considered bootstraps include Naive bootstrap (Naive BS) and advanced bootstrap types Maximum Entropy-based bootstrap (MEB),
Stationary Block Bootstrap (SBB), and
Bootstrapping Stationary Autoregressive processes (BSAP BS).
% , described in more detail in Appendix's section~\ref{subsec:bootstrap}.
To highlight the benefits of our approach, we provide another \emph{Naive} surrogate baseline that was trained without additional knowledge acquired from a base black box model.
In the ablation study, we also compare our approach with alternative surrogate-based ones.

% Additional details on the used procedure to construct baseline surrogate models are presented in Appendix, Section \ref{sec:noise-value}.

\begin{table}
\scriptsize
\caption{Ranks of regression and uncertainty estimation metrics aggregated over Forecasting data benchmark}
\label{tab:rank_table}
\begin{adjustbox}{width=0.5\textwidth}
\begin{tabular}{lccccc}
\toprule
\begin{tabular}{c}
Uncertainty \\ estimate
\end{tabular}&
\begin{tabular}{c}
Base \\ model
\end{tabular}&
\begin{tabular}{c}
RMSE $\downarrow$
\end{tabular}&
\begin{tabular}{c}
Miscal.\\
Area $\downarrow$
\end{tabular}&
\begin{tabular}{c}
RMSCE $\downarrow$
\end{tabular}&
\begin{tabular}{c}
ENCE $\downarrow$
\end{tabular}\\
\midrule
Built-in & \multirow{7}{*}{OLS} & \underline{3.825} &  4.075 &  4.075 &  4.075 \\
Naive BS & &  4.438 &  3.988 &  4.112 &  3.975 \\
MEB BS & &  4.95 &  4.912 &  4.95 &  5.025 \\
SBB BS & &  4.088 &  5.412 &  5.312 &  5.488 \\
BSAP BS & & \textbf{3.05} &  4.075 &  4.1 &  5.012 \\
Naive surrogate & & \underline{3.825} & \underline{2.962} & \underline{2.862} & \underline{2.438} \\
BAMOES (ours) & & \underline{3.825} & \textbf{2.575} & \textbf{2.588} & \textbf{1.988} \\
\hline
Built-in & \multirow{7}{*}{ARIMA} &  4.957 &  4.717 &  4.739 &  4.087 \\
Naive BS & & \textbf{2.0} & \underline{3.054} & \underline{3.022} & \textbf{3.489} \\
MEB BS & &  3.804 &  4.043 &  4.076 &  3.793 \\
SBB BS & &  4.87 &  4.413 &  4.457 &  3.848 \\
BSAP BS & & \underline{2.196} & \textbf{2.728} & \textbf{2.728} & \underline{3.522} \\
Naive surrogate & &  5.087 &  5.098 &  5.054 &  5.522 \\
BAMOES (ours) & &  5.087 &  3.946 &  3.924 &  3.739 \\
\hline
Built-in & \multirow{7}{*}{CatBoost} & \underline{3.978} &  5.178 &  5.133 &  3.6 \\
Naive BS & &  3.989 &  4.344 &  4.389 &  4.844 \\
MEB BS & &  4.733 &  4.5 &  4.489 &  4.889 \\
SBB BS & &  4.2 &  5.411 &  5.389 &  6.311 \\
BSAP BS & & \textbf{3.144} &  3.811 &  3.911 &  4.378 \\
Naive surrogate & & \underline{3.978} & \textbf{2.256} & \textbf{2.178} & \underline{2.0} \\
BAMOES (ours) & & \underline{3.978} & \underline{2.5} & \underline{2.511} & \textbf{1.978} \\
\bottomrule
\end{tabular}
\end{adjustbox}
\end{table}

\begin{table}
\scriptsize
\caption{Calibration metrics for a base neural network model. We aggregate them and calculate ranks for the first 50 time series in the yearly subset of the M4 forecasting dataset~\cite{m4_dataset}.
}
\label{tab:m4_dataset}
\begin{adjustbox}{width=0.5\textwidth}
\begin{tabular}{lcccc}
\toprule
\begin{tabular}{c}
Model
\end{tabular}&
\begin{tabular}{c}
Mean\\
Miscalibration Area $\downarrow$
\end{tabular}&
\begin{tabular}{c}
Mean\\
ENCE $\downarrow$
\end{tabular}&
\begin{tabular}{c}
Rank\\
Miscalibration Area $\downarrow$
\end{tabular}&
\begin{tabular}{c}
Rank\\
ENCE $\downarrow$
\end{tabular}\\
\midrule
MC dropout &  0.403 &  4.083 &  4.16 &  4.18 \\
Naive surrogate & \underline{0.249} &  2.58 & \textbf{2.52} &  2.9 \\
Deep Naive surrogate &  0.312 &  3.239 &  3.22 &  3.08 \\
BAMOES &  0.262 & \underline{1.083} &  2.56 & \underline{2.61} \\
Deep BAMOES & \textbf{0.244} & \textbf{0.754} & \underline{2.54} & \textbf{2.23} \\
\bottomrule
\end{tabular}
\end{adjustbox}
\end{table}

\subsection{Main results}
\label{sec:main_results}

Our goal here is to compare different methods with a focus on the quality of uncertainty estimates. 
The main results for Forecasting data are in Table~\ref{tab:rank_table}.
Since Forecasting data has a lot of time series, we decided to count ranks and average it for all pairs of a base model and the corresponding uncertainty estimate for it.
Even higher level aggregation is provided in the teaser table~\ref{tab:teaser_table} with \emph{Our Surrogate} being a pen name for BAMOES and \emph{Best Bootstrap} being a pen name for BSAP BS.
Other insights are provided by CD diagrams in Figures~\ref{fig:cd_miscal}.
% To get a better understanding on how these metrics look like for a single dataset, we provide them in Table~\ref{tab:ds_B_table} for an application of time series forecasting in financial domain.

All three ways to measure the quality lead to the same conclusion: our BAMOES provides the best results on average.
The Naive surrogate is also a strong baseline, which makes it a sound alternative in some cases.
For specific base models, the picture is the same: BAMOES and Naive Surrogate are significantly better than all other methods for CatBoost and OLS base models.
For the ARIMA base model, BAMOES is also among the cluster of methods significantly better than other methods, while Naive Surrogate underperforms being the worst method according to the ranking.

% More detailed comparisons with a discussion of obtained critical difference diagrams are given in Appendix, Section~\ref{sec:crit_diag}.
% There we also provide metrics for separate datasets A and B to give a taste of what is not only the rank but also related differences in metrics.

\begin{figure}[htpb]
   \includegraphics[width=1\columnwidth]{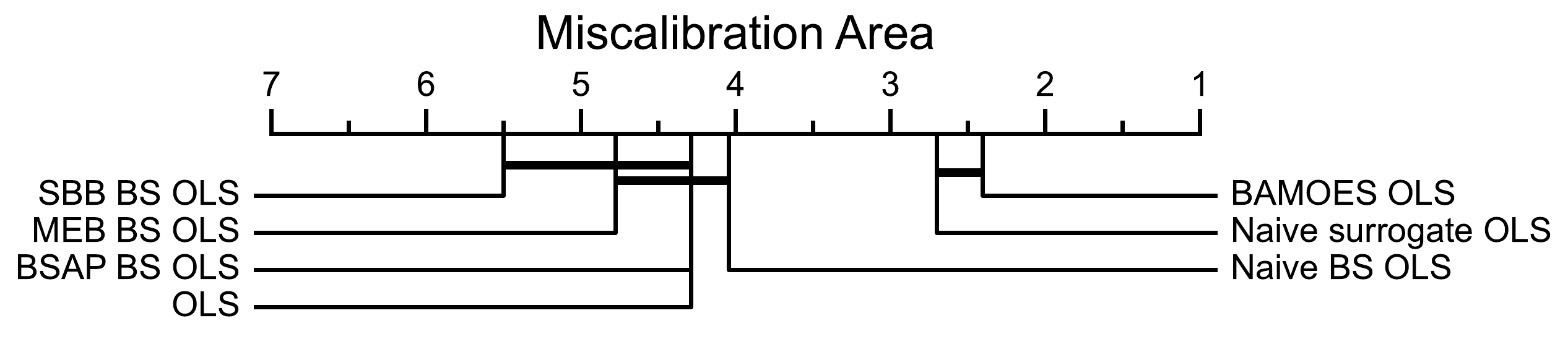}
    \caption{Model comparison of Miscalibration Area on Forecasting data. Here OLS is selected for base model.}
    \label{fig:cd_miscal}
\end{figure}

\subsection{Neural network base model}
We have also conducted experiments with recently proposed time series forecasting model, PatchTST~\cite{Yuqietal-2023-PatchTST} for a large dataset M4 from the original paper.
We have equipped this deep model with uncertainty estimation capability by using our BAMOES surrogate model.
For the sake of comparison we also present results for Naive surrogate, Naive surrogate constructed used embeddings provided by PatchTST (Deep Naive surrogate), BAMOES surrogate and BAMOES surrogate constructed used embeddings provided by PatchTST (Deep BAMOES).
These models have Gaussian process regression at their core core.
Here we also use another strong baseline for uncertainty estimation for deep neural networks based on Monte-Carlo dropout (MC dropout)~\cite{shelmanov2021certain}.

The metrics of UE are presented in Table \ref{tab:m4_dataset}. 
Embeddings-based BAMOES provides consistently better results for both metrics. 

\subsection{Comparison of surrogate models}
\label{sec:surrogate_models_results}

\paragraph{Baseline surrogate model training.} We consider other designs of experiments for training the surrogate model $s$ without changing the log-likelihood loss function but changing the data we use for training.

There are four natural options to replace $\sample$ with an alternative and completely avoid the second term in the loss function~\eqref{eq:gp_loss}:
\begin{enumerate}
    \item (Naive surrogate, Surr I) $\sample = \{(\vecX_i, y_i)\}_{i = 1}^N$,
    \item (Surr II) $\sample_{\hat{f}} = \{(\vecX_i, \hat{f}(\vecX_i))\}_{i = 1}^N$,
    \item (Surr III) $\sample_{\hat{f}} \cup \sample^{\hat{f}} = \{(\vecX_i, \hat{f}(\vecX_i))\}_{i = 1}^N \cup \{(\vecX'_i, \hat{f}(\vecX'_i))\}_{i = 1}^L$
    \item (Surr IV) $\sample \cup \sample^{\hat{f}} = \{(\vecX_i, y_i)\}_{i = 1}^N \cup \{(\vecX'_i, \hat{f}(\vecX'_i))\}_{i = 1}^L$
\end{enumerate}
As one can see, $\sample$ is the original training dataset. 
For (Surr II), we approximate the base model directly, while we lost information about the aleatoric uncertainty presented in the initial dataset. 
For (Surr III) and (Surr IV), we append additional points from input domain $\mathcal{X}$ to the training dataset to better approximate the base model. 
Moreover, for (Surr IV) we use initial targets on the original dataset.
These options are natural baselines with strong empirical evidence behind them.
% In the case of high dimensional $\vecX_i$, we replace it with the hidden state of base model $h(\vecX)_i$. 

% For (Surr II), (Surr III), and (Surr IV), we add variances at each point for each target $\hat{f}(\vecX_i)$, corresponding to inaccuracy for the base model predictions. 
% For points from $\sample$, we use small noise variance values.
% For points from new datasets, we use a single following value:
% $    \hat{\sigma}^2 = \frac{1}{N} \sum_{i = 1}^N \left( \left( \hat{f}(\vecX_i) - y_i\right) - \delta \mu_{\sigma} \right)^2$,
% where 
% $ \delta \mu_{\sigma} = \frac{1}{N} \sum_{i = 1}^N \left(\hat{f}(\vecX_i) - y_i\right)$.
% This variance is assigned for each point, as we can pass noise variances for each point as an additional input during training for almost all popular realizations of GPR.
% In our experiments, $|\sample^{\hat{f}}| = 20$ and noise assumed in targets for Surr II, Surr III, and Surr IV with variance computed by formula above. 

Results on Forecasting Data are presented in Table \ref{tab:rank_table_surr_all};
critical difference diagram is in Figure \ref{fig:surr_cd_ols_miscal}.
%\ref{fig:surr_cd_arima_miscal} and \ref{fig:surr_cd_catboost_miscal}. 
Our BAMOES surrogate outperforms the basic approach which uses no additional points, as well as other approaches with slightly worse results for the CatBoost base model.

\subsection{Selection of hyperparameters}

\paragraph{$C$ and $L$ selection.} Our approach has two key hyperparameters: the weight assigned to the second term in the loss function, denoted as $C$, and the number of supplemental points $L$ in the sample $\sample^{\hat{f}}$. In this subsection, we examine the impact of these variables on model performance. To achieve this, we adjust one hyperparameter while holding the other constant, computing the miscalibration area --- our chosen quality metric --- for each pair.

% dataset A
The outcomes of these experiments for a single dataset are delineated in Figure~\ref{fig:vary_c}. Similar trends hold across all other tested datasets. Moderate values of $C$ and a high number of generated points $L$, approaching the initial sample size $N$, tend to yield superior metrics. However, it's important to note that beyond a certain threshold, further enhancements to performance become negligible.

Based on these observations, we recommend selecting a $C$ value within the range of $[0.5, 1]$ and setting $L \approx N$.

% \paragraph{Choice of the kernel for a surrogate model}
% \label{kernel_secent}
% Here we explore the selection of an optimal kernel for the Gaussian Process Regression (GPR) surrogate model. Linear and Radial Basis Function (RBF) kernels participate in comparison to assess their respective abilities to estimate uncertainty in the Forecasting Data Benchmark.

% Figure \ref{fig:kernel} presents a comparison of the Miscalibration Area resulting from the application of the RBF and linear kernels across various base models. The linear kernel marginally outperforms the RBF kernel across all base models, particularly in conjunction with Ordinary Least Squares (OLS).

% Furthermore, the linear kernel-based surrogate model demonstrates a superior ranking for Forecasting Data relative to its RBF kernel-based counterpart. Consequently, we adopt the linear kernel in our  study. 
% However, in practical applications, a kernel that mirrors the expressiveness of the base model more accurately is better.

\begin{table}
\scriptsize
\caption{Ranks of uncertainty estimation metrics for surrogate model type of uncertainty estimation aggregated over Forecasting data benchmark}
\label{tab:rank_table_surr_all}
\begin{adjustbox}{width=0.5\textwidth}
\begin{tabular}{lcccc}
\toprule
\begin{tabular}{c}
Uncertainty \\ estimate
\end{tabular}&
\begin{tabular}{c}
Base \\ model
\end{tabular}&
\begin{tabular}{c}
Miscal.\\
Area $\downarrow$
\end{tabular}&
\begin{tabular}{c}
RMSCE $\downarrow$
\end{tabular}&
\begin{tabular}{c}
ENCE $\downarrow$
\end{tabular}\\
\midrule
Built-in & \multirow{6}{*}{OLS} &  4.28 &  4.31 &  4.74 \\
Surr I (Naive) & & \underline{2.73} & \underline{2.66} & \underline{2.75} \\
Surr II  & &  3.94 &  3.93 &  4.0 \\
Surr III  & &  4.31 &  4.37 &  4.12 \\
Surr IV  & &  3.55 &  3.56 &  3.06 \\
BAMOES (ours) & & \textbf{2.19} & \textbf{2.17} & \textbf{2.33} \\
\hline
Built-in & \multirow{6}{*}{ARIMA} &  3.669 &  3.653 &  3.898 \\
Surr I (Naive) & &  3.975 &  3.924 &  4.636 \\
Surr II  & & \underline{3.314} & \underline{3.229} &  3.246 \\
Surr III  & &  3.551 &  3.559 &  3.271 \\
Surr IV  & &  3.602 &  3.678 & \underline{3.068} \\
BAMOES (ours) & & \textbf{2.89} & \textbf{2.958} & \textbf{2.881} \\
\hline
Built-in & \multirow{6}{*}{CatBoost} &  4.681 &  4.655 &  3.966 \\
Surr I (Naive) & & \textbf{2.655} & \textbf{2.629} & \underline{2.586} \\
Surr II  & &  3.5 &  3.509 &  3.845 \\
Surr III  & &  3.94 &  3.94 &  4.526 \\
Surr IV  & &  3.362 &  3.388 &  3.526 \\
BAMOES (ours) & & \underline{2.862} & \underline{2.879} & \textbf{2.552} \\
\bottomrule
\end{tabular}
\end{adjustbox}
\end{table}

\begin{figure}[h]
    \centering
    \includegraphics[width=0.45\textwidth]{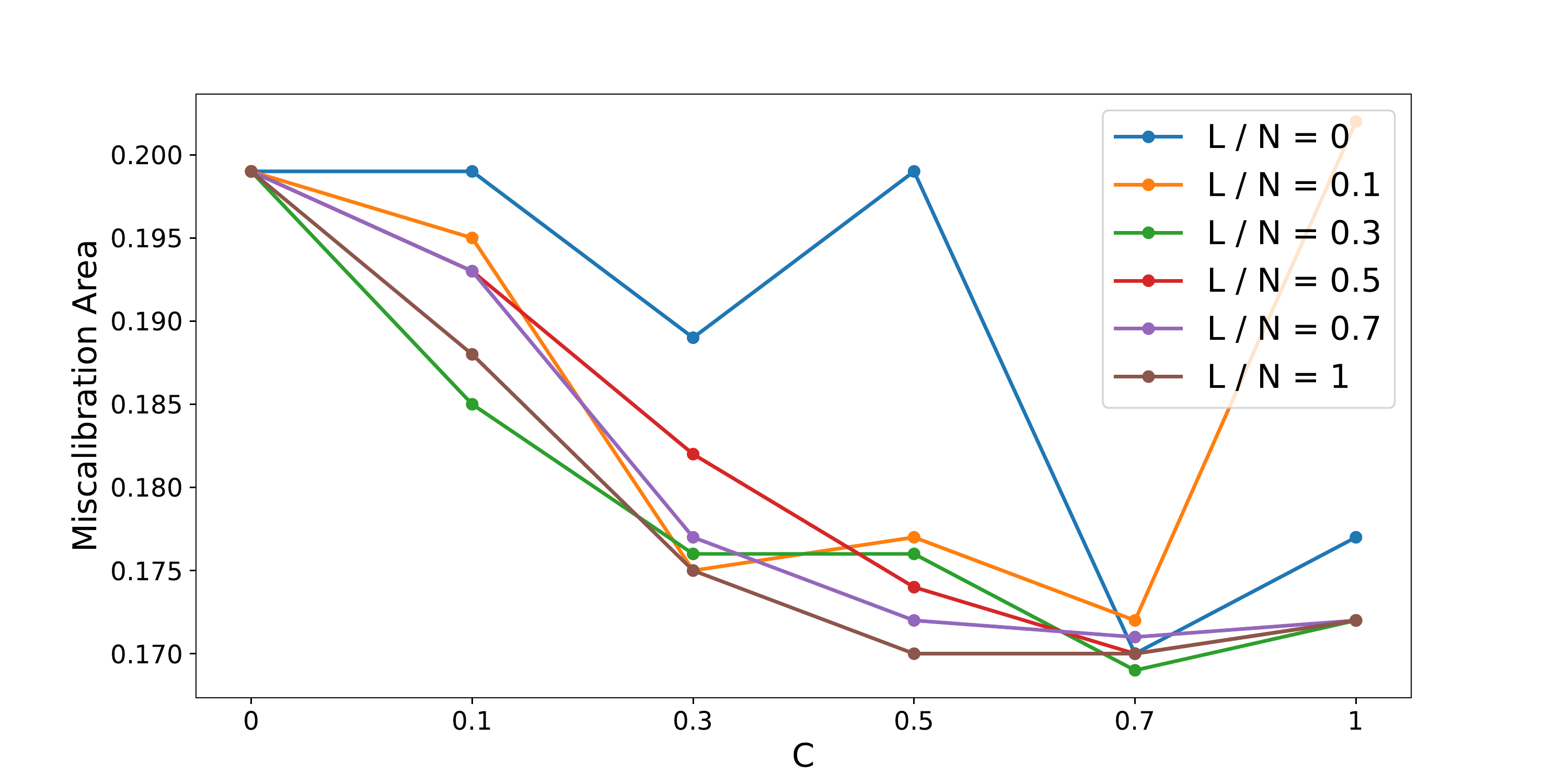}
    \caption{Dependence of the surrogate uncertainty estimate quality metric Miscalibration area on the hyperparameter $C$ for different numbers of generated points}
    \label{fig:vary_c}
\end{figure}

% \begin{figure}[h]
%     \centering
%     \includegraphics[width=0.3\textwidth]{}
%     \caption{Miscalibration Area for linear kernel and RBF kernel over Datasets A, B, and Forecasting data. Each point corresponds to results for a single dataset}
%     \label{fig:kernel}
% \end{figure}

\section{Conclusions}

The problem of efficient uncertainty estimation for a black box model is important in many applications.
We propose a surrogate uncertainty estimate.
The computational complexity of the developed method coincides with that of Gaussian process regression for the available training sample, making additional computational efforts small for most models.

Our surrogate uncertainty estimation produces accurate confidence interval predictions for different base models on different datasets. 
Calibration and regression metrics of our surrogate model are comparable with classical bootstrap ensemble methods and in average are better. 

We anticipate that the presented work will guide practitioners in estimating uncertainty using the proposed pipeline featuring a Gaussian process regression surrogate. 
It is intuitive, easy to use, and provides good results in various scenarios.

Further improvement of this model can be an application of it in NLP-related problems~\cite{shelmanov2021certain}, increasing efficiency via the usage of efficient Gaussian process regression and deep kernel learning~\cite{wilson2016deep,burnaev2015surrogate,ober2021promises}.
We also expect that a similar approach should work for a wider set of regression and classification problems. 
%While several questions concerning the presented method remain unanswered, the extensive collective knowledge makes these queries largely engineering-focused yet compelling and worthy of further exploration.

\section*{Acknowledgment}
The work by Leonid Erlygin was supported by a grant for research centers in the field of artificial intelligence, provided by the Analytical Center for the Government of the Russian Federation in accordance with the subsidy agreement (agreement identifier 000000D730321P5Q0002 ) and the agreement with the Ivannikov Institute for System Programming of the Russian Academy of Sciences dated November 2, 2021 No. 70-2021-00142.
The work of others was supported by Sber.

%%
%% The acknowledgments section is defined using the "acks" environment
%% (and NOT an unnumbered section). This ensures the proper
%% identification of the section in the article metadata, and the
%% consistent spelling of the heading.
% \begin{acks}
% We thank Evgenya Romanenkova for helpful advice and thorough paper reading.
% \end{acks}

%%
%% The next two lines define the bibliography style to be used, and
%% the bibliography file.
\clearpage
\bibliographystyle{IEEEtran}
\bibliography{main}

% Generated by IEEEtran.bst, version: 1.12 (2007/01/11)
\begin{thebibliography}{10}
\providecommand{\url}[1]{#1}
\csname url@samestyle\endcsname
\providecommand{\newblock}{\relax}
\providecommand{\bibinfo}[2]{#2}
\providecommand{\BIBentrySTDinterwordspacing}{\spaceskip=0pt\relax}
\providecommand{\BIBentryALTinterwordstretchfactor}{4}
\providecommand{\BIBentryALTinterwordspacing}{\spaceskip=\fontdimen2\font plus
\BIBentryALTinterwordstretchfactor\fontdimen3\font minus \fontdimen4\font\relax}
\providecommand{\BIBforeignlanguage}[2]{{%
\expandafter\ifx\csname l@#1\endcsname\relax
\typeout{** WARNING: IEEEtran.bst: No hyphenation pattern has been}%
\typeout{** loaded for the language `#1'. Using the pattern for}%
\typeout{** the default language instead.}%
\else
\language=\csname l@#1\endcsname
\fi
#2}}
\providecommand{\BIBdecl}{\relax}
\BIBdecl

\bibitem{Kompa2021}
\BIBentryALTinterwordspacing
B.~Kompa, J.~Snoek, and A.~L. Beam, ``Second opinion needed: communicating uncertainty in medical machine learning,'' \emph{npj Digital Medicine}, vol.~4, no.~1, p.~4, Jan 2021. [Online]. Available: \url{https://doi.org/10.1038/s41746-020-00367-3}
\BIBentrySTDinterwordspacing

\bibitem{9525313}
D.~Feng, A.~Harakeh, S.~L. Waslander, and K.~Dietmayer, ``A review and comparative study on probabilistic object detection in autonomous driving,'' \emph{IEEE Transactions on Intelligent Transportation Systems}, vol.~23, no.~8, pp. 9961--9980, 2022.

\bibitem{lahiri2003resampling}
S.~Lahiri and S.~Lahiri, \emph{Resampling methods for dependent data}.\hskip 1em plus 0.5em minus 0.4em\relax New York: Springer Science \& Business Media, 2003.

\bibitem{lim2021time}
B.~Lim and S.~Zohren, ``Time-series forecasting with deep learning: a survey,'' \emph{Philosophical Transactions of the Royal Society A}, vol. 379, no. 2194, p. 20200209, 2021.

\bibitem{zhu2017deep}
L.~Zhu and N.~Laptev, ``Deep and confident prediction for time series at {U}ber,'' in \emph{2017 IEEE International Conference on Data Mining Workshops (ICDMW)}.\hskip 1em plus 0.5em minus 0.4em\relax IEEE, 2017, pp. 103--110.

\bibitem{2210.17030}
Y.~Fujimoto, K.~Nakagawa, K.~Imajo, and K.~Minami, ``Uncertainty aware trader-company method: Interpretable stock price prediction capturing uncertainty,'' 2022.

\bibitem{lakshminarayanan2017simple}
B.~Lakshminarayanan, A.~Pritzel, and C.~Blundell, ``Simple and scalable predictive uncertainty estimation using deep ensembles,'' \emph{NeurIPS}, vol.~30, 2017.

\bibitem{shelmanov2021certain}
A.~Shelmanov, E.~Tsymbalov, D.~Puzyrev, K.~Fedyanin, A.~Panchenko, and M.~Panov, ``How certain is your transformer?'' in \emph{Proceedings of the 16th Conference of the European Chapter of the Association for Computational Linguistics: Main Volume}, 2021, pp. 1833--1840.

\bibitem{koziel2013surrogate}
S.~Koziel and L.~Leifsson, \emph{Surrogate-based modeling and optimization}.\hskip 1em plus 0.5em minus 0.4em\relax Springer, 2013.

\bibitem{williams2006gaussian}
C.~K. Williams and C.~E. Rasmussen, \emph{Gaussian processes for machine learning}.\hskip 1em plus 0.5em minus 0.4em\relax Cambridge, MA: MIT press, 2006, vol.~2.

\bibitem{wilson2016deep}
A.~G. Wilson, Z.~Hu, R.~Salakhutdinov, and E.~P. Xing, ``Deep kernel learning,'' in \emph{AISTATS}.\hskip 1em plus 0.5em minus 0.4em\relax PMLR, 2016, pp. 370--378.

\bibitem{NEURIPS2020_543e8374}
\BIBentryALTinterwordspacing
J.~Liu, Z.~Lin, S.~Padhy, D.~Tran, T.~Bedrax~Weiss, and B.~Lakshminarayanan, ``Simple and principled uncertainty estimation with deterministic deep learning via distance awareness,'' in \emph{NeurIPS}, H.~Larochelle, M.~Ranzato, R.~Hadsell, M.~Balcan, and H.~Lin, Eds., vol.~33.\hskip 1em plus 0.5em minus 0.4em\relax Virtual: Curran Associates, Inc., 2020, pp. 7498--7512. [Online]. Available: \url{https://proceedings.neurips.cc/paper/2020/file/543e83748234f7cbab21aa0ade66565f-Paper.pdf}
\BIBentrySTDinterwordspacing

\bibitem{shao1996bootstrap}
J.~Shao, ``Bootstrap model selection,'' \emph{Journal of the American statistical Association}, vol.~91, no. 434, pp. 655--665, 1996.

\bibitem{fasiolo2021fast}
M.~Fasiolo, S.~N. Wood, M.~Zaffran, R.~Nedellec, and Y.~Goude, ``Fast calibrated additive quantile regression,'' \emph{Journal of the American Statistical Association}, vol. 116, no. 535, pp. 1402--1412, 2021.

\bibitem{salem2020prediction}
T.~S. Salem, H.~Langseth, and H.~Ramampiaro, ``Prediction intervals: Split normal mixture from quality-driven deep ensembles,'' in \emph{UAI}.\hskip 1em plus 0.5em minus 0.4em\relax PMLR, 2020, pp. 1179--1187.

\bibitem{box2015time}
G.~E. Box, G.~M. Jenkins, G.~C. Reinsel, and G.~M. Ljung, \emph{Time series analysis: forecasting and control}.\hskip 1em plus 0.5em minus 0.4em\relax John Wiley \& Sons, 2015.

\bibitem{dorogush2018catboost}
A.~V. Dorogush, V.~Ershov, and A.~Gulin, ``Catboost: gradient boosting with categorical features support,'' \emph{arXiv preprint arXiv:1810.11363}, 2018.

\bibitem{snyder2001prediction}
R.~D. Snyder, J.~K. Ord, and A.~B. Koehler, ``Prediction intervals for arima models,'' \emph{Journal of Business \& Economic Statistics}, vol.~19, no.~2, pp. 217--225, 2001.

\bibitem{malinin2020uncertainty}
A.~Malinin, L.~Prokhorenkova, and A.~Ustimenko, ``Uncertainty in gradient boosting via ensembles,'' \emph{arXiv preprint arXiv:2006.10562}, 2020.

\bibitem{duan2020ngboost}
T.~Duan, A.~Anand, D.~Y. Ding, K.~K. Thai, S.~Basu, A.~Ng, and A.~Schuler, ``Ngboost: Natural gradient boosting for probabilistic prediction,'' in \emph{ICML}.\hskip 1em plus 0.5em minus 0.4em\relax PMLR, 2020, pp. 2690--2700.

\bibitem{roy2011comprehensive}
C.~J. Roy and W.~L. Oberkampf, ``A comprehensive framework for verification, validation, and uncertainty quantification in scientific computing,'' \emph{Computer methods in applied mechanics and engineering}, vol. 200, no. 25-28, pp. 2131--2144, 2011.

\bibitem{abdar2021review}
M.~Abdar, F.~Pourpanah, S.~Hussain \emph{et~al.}, ``A review of uncertainty quantification in deep learning: Techniques, applications and challenges,'' \emph{Information Fusion}, vol.~76, pp. 243--297, 2021.

\bibitem{liu2019accurate}
J.~Liu, J.~Paisley, M.-A. Kioumourtzoglou, and B.~Coull, ``Accurate uncertainty estimation and decomposition in ensemble learning,'' \emph{NeurIPS}, vol.~32, 2019.

\bibitem{kunsch1989jacknife}
\BIBentryALTinterwordspacing
H.~R. Kunsch, ``{The Jackknife and the Bootstrap for General Stationary Observations},'' \emph{The Annals of Statistics}, vol.~17, no.~3, pp. 1217 -- 1241, 1989. [Online]. Available: \url{https://doi.org/10.1214/aos/1176347265}
\BIBentrySTDinterwordspacing

\bibitem{politis1994stationary}
\BIBentryALTinterwordspacing
D.~N. Politis and J.~P. Romano, ``The stationary bootstrap,'' \emph{Journal of the American Statistical Association}, vol.~89, no. 428, pp. 1303--1313, 1994. [Online]. Available: \url{http://www.jstor.org/stable/2290993}
\BIBentrySTDinterwordspacing

\bibitem{paparoditis2001tapered}
\BIBentryALTinterwordspacing
E.~Paparoditis and D.~N. Politis, ``Tapered block bootstrap,'' \emph{Biometrika}, vol.~88, no.~4, pp. 1105--1119, 2001. [Online]. Available: \url{http://www.jstor.org/stable/2673704}
\BIBentrySTDinterwordspacing

\bibitem{tagasovska2019single}
N.~Tagasovska and D.~Lopez-Paz, ``Single-model uncertainties for deep learning,'' \emph{NeurIPS}, vol.~32, 2019.

\bibitem{chung2021beyond}
Y.~Chung, W.~Neiswanger, I.~Char, and J.~Schneider, ``Beyond pinball loss: Quantile methods for calibrated uncertainty quantification,'' \emph{NeurIPS}, vol.~34, pp. 10\,971--10\,984, 2021.

\bibitem{zaytsev2018interpolation}
A.~Zaytsev, E.~Romanenkova, and D.~Ermilov, ``Interpolation error of gaussian process regression for misspecified case,'' in \emph{Conformal and Probabilistic Prediction and Applications}.\hskip 1em plus 0.5em minus 0.4em\relax PMLR, 2018, pp. 83--95.

\bibitem{neal2012bayesian}
R.~M. Neal, \emph{Bayesian learning for neural networks}.\hskip 1em plus 0.5em minus 0.4em\relax New York: Springer Science \& Business Media, 2012, vol. 118.

\bibitem{damianou2013deep}
A.~Damianou and N.~D. Lawrence, ``Deep {G}aussian processes,'' in \emph{AISTATS}, PMLR.\hskip 1em plus 0.5em minus 0.4em\relax Scottsdale, AZ, USA: JMLR Workshop and Conference Proceedings, 2013, pp. 207--215.

\bibitem{van2011information}
A.~Van Der~Vaart and H.~Van~Zanten, ``Information rates of nonparametric gaussian process methods.'' \emph{Journal of Machine Learning Research}, vol.~12, no.~6, 2011.

\bibitem{zaytsev2017minimax}
A.~Zaytsev and E.~Burnaev, ``Minimax approach to variable fidelity data interpolation,'' in \emph{AISTATS}.\hskip 1em plus 0.5em minus 0.4em\relax PMLR, 2017, pp. 652--661.

\bibitem{roberts2013gaussian}
S.~Roberts, M.~Osborne, M.~Ebden, S.~Reece, N.~Gibson, and S.~Aigrain, ``Gaussian processes for time-series modelling,'' \emph{Philosophical Transactions of the Royal Society A: Mathematical, Physical and Engineering Sciences}, vol. 371, no. 1984, p. 20110550, 2013.

\bibitem{gutjahr2012sparse}
T.~Gutjahr, H.~Ulmer, and C.~Ament, ``Sparse gaussian processes with uncertain inputs for multi-step ahead prediction,'' \emph{IFAC Proceedings Volumes}, vol.~45, no.~16, pp. 107--112, 2012.

\bibitem{miyato2018spectral}
T.~Miyato, T.~Kataoka, M.~Koyama, and Y.~Yoshida, ``Spectral normalization for generative adversarial networks,'' in \emph{International Conference on Learning Representations}.\hskip 1em plus 0.5em minus 0.4em\relax Vancouver Convention Center, Vancouver Canada: ICLR, 2018, pp. 1--26.

\bibitem{mi2022training}
L.~Mi, H.~Wang, Y.~Tian, H.~He, and N.~N. Shavit, ``Training-free uncertainty estimation for dense regression: Sensitivity as a surrogate,'' in \emph{Proceedings of the AAAI Conference on Artificial Intelligence}, vol.~36.\hskip 1em plus 0.5em minus 0.4em\relax Palo Alto, California USA: AAAI Press, 2022, pp. 10\,042--10\,050.

\bibitem{tsymbalov2019deeper}
E.~Tsymbalov, S.~Makarychev, A.~Shapeev, and M.~Panov, ``Deeper connections between neural networks and gaussian processes speed-up active learning,'' in \emph{Proceedings of the 28th International Joint Conference on Artificial Intelligence}, ser. IJCAI'19.\hskip 1em plus 0.5em minus 0.4em\relax Macao, China: AAAI Press, 2019, p. 3599–3605.

\bibitem{WANG2014167}
\BIBentryALTinterwordspacing
C.~Wang, Q.~Duan, W.~Gong, A.~Ye, Z.~Di, and C.~Miao, ``An evaluation of adaptive surrogate modeling based optimization with two benchmark problems,'' \emph{Environmental Modelling \& Software}, vol.~60, pp. 167--179, 2014. [Online]. Available: \url{https://www.sciencedirect.com/science/article/pii/S1364815214001698}
\BIBentrySTDinterwordspacing

\bibitem{quinonero2005unifying}
J.~Quinonero-Candela and C.~E. Rasmussen, ``A unifying view of sparse approximate gaussian process regression,'' \emph{The Journal of Machine Learning Research}, vol.~6, pp. 1939--1959, 2005.

\bibitem{godahewa2021monash}
R.~Godahewa, C.~Bergmeir, G.~I. Webb, R.~J. Hyndman, and P.~Montero-Manso, ``Monash time series forecasting archive,'' \emph{arXiv preprint arXiv:2105.06643}, 2021.

\bibitem{chung2021uncertainty}
Y.~Chung, I.~Char, H.~Guo, J.~Schneider, and W.~Neiswanger, ``Uncertainty toolbox: an open-source library for assessing, visualizing, and improving uncertainty quantification,'' \emph{arXiv preprint arXiv:2109.10254}, vol. 2109, pp. 1--8, 2021.

\bibitem{tran2020methods}
K.~Tran, W.~Neiswanger, J.~Yoon, Q.~Zhang, E.~Xing, and Z.~W. Ulissi, ``Methods for comparing uncertainty quantifications for material property predictions,'' \emph{Machine Learning: Science and Technology}, vol.~1, no.~2, p. 025006, 2020.

\bibitem{ence}
\BIBentryALTinterwordspacing
D.~Levi, L.~Gispan, N.~Giladi, and E.~Fetaya, ``Evaluating and calibrating uncertainty prediction in regression tasks,'' \emph{Sensors}, vol.~22, no.~15, 2022. [Online]. Available: \url{https://www.mdpi.com/1424-8220/22/15/5540}
\BIBentrySTDinterwordspacing

\bibitem{demvsar2006statistical}
J.~Dem{\v{s}}ar, ``Statistical comparisons of classifiers over multiple data sets,'' \emph{The Journal of Machine learning research}, vol.~7, pp. 1--30, 2006.

\bibitem{IsmailFawaz2018deep}
H.~Ismail~Fawaz, G.~Forestier, J.~Weber, L.~Idoumghar, and P.-A. Muller, ``Deep learning for time series classification: a review,'' \emph{Data Mining and Knowledge Discovery}, vol.~33, no.~4, pp. 917--963, 2019.

\bibitem{m4_dataset}
\BIBentryALTinterwordspacing
S.~Makridakis, E.~Spiliotis, and V.~Assimakopoulos, ``The m4 competition: 100,000 time series and 61 forecasting methods,'' \emph{International Journal of Forecasting}, vol.~36, no.~1, pp. 54--74, 2020, m4 Competition. [Online]. Available: \url{https://www.sciencedirect.com/science/article/pii/S0169207019301128}
\BIBentrySTDinterwordspacing

\bibitem{Yuqietal-2023-PatchTST}
Y.~Nie, N.~H.~Nguyen, P.~Sinthong, and J.~Kalagnanam, ``A time series is worth 64 words: Long-term forecasting with transformers,'' in \emph{International Conference on Learning Representations}, 2023.

\bibitem{burnaev2015surrogate}
E.~Burnaev and A.~Zaytsev, ``Surrogate modeling of multifidelity data for large samples,'' \emph{Journal of Communications Technology and Electronics}, vol.~60, pp. 1348--1355, 2015.

\bibitem{ober2021promises}
S.~W. Ober, C.~E. Rasmussen, and M.~van~der Wilk, ``The promises and pitfalls of deep kernel learning,'' in \emph{UAI}.\hskip 1em plus 0.5em minus 0.4em\relax PMLR, 2021, pp. 1206--1216.

\end{thebibliography}

%%
%% If your work has an appendix, this is the place to put it.
% \newpage
% \phantom{a}
\clearpage
\appendix

\subsection{BAMOES pseudocode}
\label{sec:pseudocode}

In Algorithm \ref{alg:surr_ind} we provide a pseudocode for our BAMOES approach.
We assume, that a surrogate $s(\vecX)$ has a corresponding kernel $k_s(\vecX, \vecX')$:
\begin{algorithm}
\caption{BAMOES training}\label{alg:surr_ind}
\begin{algorithmic}[1]
\Require Data ($X$, $\vecY$), base model $\hat{f}$, sample size $L$, loss weight $C$, number of epochs $M$
\State Sample uniformly $L$ points of interest:
\State  $X'=\{\vecX_i'\}_{i=1}^L$, $x_{ij}' \sim U(\mathrm{low} = \min(X_{\cdot j}), \mathrm{high} = \max(X_{\cdot j}))$
\State Get base model predictions $\vecY' = \{\hat{f}(\vecX_i')\}_{i=1}^L$
\State Define GPR surrogate model $s(\vecX)$
\State{\textbf{for all} epoch $\in \{1,\dots, M\}$}
\State \phantom{aaaa} Compute the log likelihood $l(s) = \log p(\vecY| X, s)$
\State \phantom{aaaa} Compute surrogate predictions $\{s(\vecX_i')\}_{i=1}^L$
\State \phantom{aaaa} Compute $e(s) = \sum_{i = 1}^{L} (s(\vecX_i') - \hat{f}(\vecX_i'))^2$
\State \phantom{aaaa} Calculate loss: $\text{loss}(s) = -(1 - C) l(s) + C e(s)$
\State \phantom{aaaa} Update $k_s(\cdot\,,\cdot)$ to minimize loss via Adam for the gradients of $\text{loss}(s)$
\State \Return Surrogate $s(\vecX)$
\end{algorithmic}
\end{algorithm}

% \begin{lstlisting}[language=Python]
% # sample new points uniformly
% X_new = np.random.uniform(
%                 low=np.min(x_history_norm),
%                 high=np.max(x_history_norm),
%                 size=(L, num_features),
%             )
% # Find optimal model hyperparameters
% gpr_model.train()
% gpr_likelihood.train()
% optimizer = Adam(gpr_model.parameters())
% mll = ExactMarginalLogLikelihood(
%     gpr_likelihood, gpr_model
% )
% for i in range(training_iter):
%     optimizer.zero_grad()
    
%     output = gpr_model(x_history_norm)

%     # Calc log likelihood loss
%     loss_mll = -mll(output, y_history_norm)

%     # Calc mse loss

%     # get base model predictions
%     base_model_pred = get_base_model_pred()
%     # go to gpr inference mode 
%     # and compute posteriory mean
%     gpr_model.inference()
%     gpr_model_pred = gpr_model(X_new).mean
%     gpr_model.train()

%     loss_mse = F.mse_loss(
%         gpr_model_pred, 
%         base_model_pred)

%     # aggregate loss
%     loss = (1 - C) * loss_mll + C * loss_mse
%     loss.backward()
%     optimizer.step()
% \end{lstlisting}

\subsection{Additional experiments}
\label{sec:add_experiments}

\subsubsection{Quality of uncertainty estimation for datasets A and B}
We also provide metrics for separate datasets A and B to give a taste of what is not only the rank but also related differences in metrics.
Tables~\ref{tab:ds_A_table} and~\ref{tab:ds_B_table} provide detailed uncertainty estimation quality metrics for Dataset A and Dataset B, respectively. 
Again, our BAMOES provides superior uncertainty estimation metrics in many cases.

\subsubsection{Quality of uncertainty estimation for different base models}

The results of the base models are given in Table~\ref{tab:base_models_results_loans_short} for Dataset A and in Table~\ref{tab:base_models_results_mortgages_short} for Dataset B. We present both regression quality and quality of uncertainty estimation.
For all considered forecasting horizons Gaussian process regression (GPR) approach is in the top two for both considered uncertainty estimation quality metrics. Consequently, it is a good candidate for a surrogate-based uncertainty estimate, if we can make GPR close to a base model.

\begin{table}[H]
\scriptsize
\caption{Regression and calibration metrics for base models on Dataset A. Best results are highlighted with \textbf{bold} font, second best results are \underline{underscored}
}
\label{tab:base_models_results_loans_short}
\begin{adjustbox}{width=0.5\textwidth}
\begin{tabular}{cccccc}
\toprule
\begin{tabular}{c}
Model
\end{tabular}&
\begin{tabular}{c}
Horizon
\end{tabular}&
\begin{tabular}{c}
RMSE
\end{tabular}&
\begin{tabular}{c}
RMSCE
\end{tabular}&
\begin{tabular}{c}
Miscal.\\
Area
\end{tabular}&
\begin{tabular}{c}
ENCE
\end{tabular}\\
\midrule
GPR & \multirow{6}{*}{3} &  14712.701 & \textbf{0.319} & \textbf{0.262} & \textbf{0.322} \\
OLS & & \underline{11933.3} &  0.57 &  0.495 &  3.285 \\
CatBoost & &  14173.191 &  0.57 &  0.495 &  0.999 \\
ARIMA & & \textbf{9035.37} & \underline{0.333} & \underline{0.286} & \underline{0.695} \\
SQR PL & & 8152.979 & 0.131 & 0.106 & 0.489 \\
SQR CL & & 6572.678 & 0.236 & 0.195 & 0.912 \\
\hline
GPR & \multirow{6}{*}{6} &  12725.005 & \textbf{0.199} & \textbf{0.164} & \textbf{0.302} \\
OLS & & \underline{10917.502} &  0.558 &  0.487 &  2.604 \\
CatBoost & & \textbf{6206.29} &  0.57 &  0.495 &  1.0 \\
ARIMA & &  17727.489 & \underline{0.278} & \underline{0.238} & \underline{0.593} \\
SQR PL & & 11580.078 & 0.191 & 0.144 & 0.404 \\
SQR CL & & 9959.138 & 0.173 & 0.125 & 0.735 \\
\hline
GPR & \multirow{6}{*}{12} &  19160.008 & \textbf{0.377} & \textbf{0.322} & \textbf{0.645} \\
OLS & & \textbf{13849.332} &  0.518 &  0.46 &  2.696 \\
CatBoost & &  18604.156 &  0.57 &  0.495 &  0.999 \\
ARIMA & & \underline{14781.831} & \underline{0.405} & \underline{0.351} & \underline{0.801} \\
SQR PL & & 7522.467 & 0.12 & 0.103 & 0.418 \\
SQR CL & & 11053.891 & 0.213 & 0.183 & 0.291 \\
\hline
GPR & \multirow{6}{*}{18} & \textbf{20955.192} & \textbf{0.074} & \textbf{0.061} & \underline{0.707} \\
OLS & &  21764.586 & \underline{0.264} & \underline{0.226} &  1.662 \\
CatBoost & & \underline{21170.805} &  0.57 &  0.495 &  0.999 \\
ARIMA & &  29662.23 &  0.36 &  0.314 & \textbf{0.598} \\
SQR PL & & 27518.828 & 0.363 & 0.326 & 3.466 \\
SQR CL & & 22314.858 & 0.089 & 0.073 & 0.92 \\
\hline
GPR & \multirow{6}{*}{24} & \underline{21493.571} & \underline{0.266} & \underline{0.239} & \textbf{0.423} \\
OLS & & \textbf{16362.093} & \textbf{0.184} & \textbf{0.167} &  0.824 \\
CatBoost & &  33352.452 &  0.563 &  0.489 &  0.987 \\
ARIMA & &  25524.539 &  0.313 &  0.269 & \underline{0.689} \\
SQR PL & & 26595. & 0.363 & 0.326 & 3.466 \\
SQR CL & & 22314.858 & 0.089 & 0.073 & 0.92 \\
\bottomrule
\end{tabular}
\end{adjustbox}
\end{table}

\begin{table}[H]
\scriptsize
\caption{Regression and calibration metrics for base models on Dataset B. Best results are highlighted with \textbf{bold} font, second best results are \underline{underscored}
}
\label{tab:base_models_results_mortgages_short}
\begin{adjustbox}{width=0.5\textwidth}
\begin{tabular}{cccccc}
\toprule
\begin{tabular}{c}
Model
\end{tabular}&
\begin{tabular}{c}
Horizon
\end{tabular}&
\begin{tabular}{c}
RMSE
\end{tabular}&
\begin{tabular}{c}
RMSCE
\end{tabular}&
\begin{tabular}{c}
Miscal.\\
Area
\end{tabular}&
\begin{tabular}{c}
ENCE
\end{tabular}\\
\midrule
GPR & \multirow{4}{*}{3} & \textbf{0.015} & \textbf{0.203} & \textbf{0.171} & \textbf{0.536} \\
OLS & &  0.016 & \underline{0.254} & \underline{0.216} &  1.839 \\
CatBoost & &  0.019 &  0.57 &  0.495 &  226.223 \\
ARIMA & & \textbf{0.015} &  0.352 &  0.295 & \underline{0.733} \\
\hline
GPR & \multirow{4}{*}{6} & \textbf{0.012} & \textbf{0.258} & \textbf{0.223} & \textbf{0.554} \\
OLS & & \underline{0.013} & \underline{0.319} & \underline{0.268} &  1.279 \\
CatBoost & &  0.02 &  0.57 &  0.495 &  239.636 \\
ARIMA & &  0.017 &  0.391 &  0.335 & \underline{0.785} \\
\hline
GPR & \multirow{4}{*}{12} &  0.018 & \textbf{0.135} & \textbf{0.117} & \textbf{0.319} \\
OLS & & \underline{0.017} & \underline{0.259} & \underline{0.241} &  1.561 \\
CatBoost & & \textbf{0.016} &  0.57 &  0.495 &  1012.208 \\
ARIMA & &  0.022 &  0.43 &  0.378 & \underline{0.831} \\
\hline
GPR & \multirow{4}{*}{18} & \textbf{0.028} & \textbf{0.072} & \textbf{0.058} & \textbf{0.376} \\
OLS & & \textbf{0.028} &  0.375 &  0.316 &  2.425 \\
CatBoost & &  0.032 &  0.57 &  0.495 &  395.901 \\
ARIMA & &  0.049 & \underline{0.239} & \underline{0.201} & \underline{0.555} \\
\hline
GPR & \multirow{4}{*}{24} & \underline{0.029} & \textbf{0.074} & \textbf{0.066} & \textbf{0.431} \\
OLS & & \textbf{0.028} & \underline{0.384} & \underline{0.35} &  1.801 \\
CatBoost & & \underline{0.029} &  0.535 &  0.455 &  2250.228 \\
ARIMA & &  0.063 &  0.435 &  0.374 & \underline{0.836} \\
\bottomrule
\end{tabular}
\end{adjustbox}
\end{table}

%  with 24 time steps in test
\begin{table}
\scriptsize
\caption{Quality metrics for regression and uncertainty estimation on Dataset A}
\label{tab:ds_A_table}
\begin{adjustbox}{width=0.5\textwidth}
\begin{tabular}{lccccc}
\toprule
\begin{tabular}{c}
Uncertainty \\ estimate
\end{tabular}&
\begin{tabular}{c}
Base model
\end{tabular}&
\begin{tabular}{c}
RMSE
\end{tabular}&
\begin{tabular}{c}
Miscal.\\
Area
\end{tabular}&
\begin{tabular}{c}
RMSCE
\end{tabular}&
\begin{tabular}{c}
ENCE
\end{tabular}\\
\midrule
Built-in & \multirow{7}{*}{OLS} & \textbf{16362.093} &  0.167 &  0.184 &  0.824 \\
Naive BS & &  16696.277 &  0.204 &  0.229 &  1.189 \\
MEB BS & &  16512.081 &  0.32 &  0.358 &  2.213 \\
SBB BS & &  17394.776 & \underline{0.054} & \underline{0.067} &  0.476 \\
BSAP BS & &  20228.613 &  0.242 &  0.276 &  2.596 \\
Naive Surrogate & & \textbf{16362.093} & \textbf{0.028} & \textbf{0.036} & \textbf{0.234} \\
BAMOES (ours) & & \textbf{16362.093} &  0.084 &  0.098 & \underline{0.384} \\
\hline
Built-in & \multirow{7}{*}{ARIMA} &  25524.539 &  0.269 &  0.313 &  0.689 \\
Naive BS & & \textbf{16296.585} & \underline{0.144} & \underline{0.16} & \textbf{0.61} \\
MEB BS & &  26411.274 &  0.45 &  0.513 &  4.075 \\
SBB BS & &  32248.943 &  0.43 &  0.479 &  2.078 \\
BSAP BS & & \underline{23403.858} & \textbf{0.099} & \textbf{0.115} & \underline{0.612} \\
Naive surrogate & &  25524.539 &  0.292 &  0.33 &  0.823 \\
BAMOES (ours) & &  25524.539 &  0.358 &  0.4 &  1.317 \\
\hline
Built-in & \multirow{7}{*}{CatBoost} &  33352.452 &  0.489 &  0.563 & \textbf{0.987} \\
Naive BS & & \underline{31155.193} &  0.456 &  0.535 &  14.201 \\
MEB BS & &  31949.12 &  0.466 &  0.538 &  9.144 \\
SBB BS & &  33602.92 &  0.495 &  0.57 &  22.932 \\
BSAP BS & & \textbf{30253.746} &  0.469 &  0.54 &  13.866 \\
Naive surrogate & &  33352.452 & \textbf{0.354} & \textbf{0.394} & \underline{1.374} \\
BAMOES (ours) & &  33352.452 & \underline{0.373} & \underline{0.413} &  1.437 \\
\bottomrule
\end{tabular}
\end{adjustbox}
\end{table}

%  with 24 time steps in test
\begin{table}
\scriptsize
\caption{Quality metrics for regression and uncertainty estimation on Dataset B}
\label{tab:ds_B_table}
\begin{adjustbox}{width=0.5\textwidth}
\begin{tabular}{lccccc}
\toprule
\begin{tabular}{c}
Uncertainty \\ estimate
\end{tabular}&
\begin{tabular}{c}
Base model
\end{tabular}&
\begin{tabular}{c}
RMSE
\end{tabular}&
\begin{tabular}{c}
Miscal.\\
Area
\end{tabular}&
\begin{tabular}{c}
RMSCE
\end{tabular}&
\begin{tabular}{c}
ENCE
\end{tabular}\\
\midrule
Built-in & \multirow{7}{*}{OLS} & \textbf{0.028} &  0.35 &  0.384 &  1.801 \\
Naive BS & &  0.029 &  0.363 &  0.406 &  2.922 \\
MEB BS & & \textbf{0.028} &  0.408 &  0.452 &  2.507 \\
SBB BS & & \textbf{0.028} &  0.363 &  0.399 &  1.913 \\
BSAP BS & &  0.03 &  0.314 &  0.352 &  2.12 \\
Naive surrogate & & \textbf{0.028} & \underline{0.058} & \underline{0.067} & \underline{0.416} \\
BAMOES (ours) & & \textbf{0.028} & \textbf{0.054} & \textbf{0.063} & \textbf{0.348} \\
\hline
Built-in & \multirow{7}{*}{ARIMA} &  0.063 &  0.374 &  0.435 &  0.836 \\
Naive BS & & \textbf{0.029} &  0.41 &  0.451 &  2.647 \\
MEB BS & &  0.039 &  0.272 &  0.295 &  1.057 \\
SBB BS & &  0.065 & \textbf{0.085} & \textbf{0.094} & \underline{0.465} \\
BSAP BS & & \underline{0.035} &  0.206 &  0.245 &  1.177 \\
Naive surrogate & &  0.063 &  0.37 &  0.407 &  1.561 \\
BAMOES (ours) & &  0.063 & \underline{0.086} & \underline{0.101} & \textbf{0.342} \\
\hline
Built-in & \multirow{7}{*}{CatBoost} &  0.029 &  0.455 &  0.535 &  2250.228 \\
Naive BS & & \underline{0.028} &  0.225 &  0.263 &  3.336 \\
MEB BS & &  0.03 &  0.29 &  0.345 &  2.417 \\
SBB BS & &  0.031 &  0.334 &  0.386 &  4.092 \\
BSAP BS & & \textbf{0.026} &  0.178 &  0.215 &  1.934 \\
Naive surrogate & &  0.029 & \underline{0.046} & \textbf{0.055} & \textbf{0.479} \\
BAMOES (ours) & &  0.029 & \textbf{0.045} & \textbf{0.055} & \underline{0.486} \\
\bottomrule
\end{tabular}
\end{adjustbox}
\end{table}

\subsubsection{Critical difference diagrams}
\label{sec:crit_diag}

Let us examine additional critical difference diagrams.

In the case of the OLS base model (Figures~\ref{fig:cd_miscal} and~\ref{fig:cd_rmse}), we can see that MEB BS OLS and Naive BS OLS are not significantly different in terms of Miscalibration Area. Moreover, other methods are in a clique. In terms of RMSE, there are three cliques: first clique (Naive BS OLS), second clique (MEB BS OLS, OLS, Surr I OLS, Surr I OLS), and third clique (SBB BS OLS, BSAP BS OLS). In Table~\ref{tab:rank_table} we can see that some methods have metric values equal to or close enough to the base model (OLS).

In the case of the ARIMA base model (Figures~\ref{fig:cd_miscal} and~\ref{fig:cd_rmse}), we see that there are three cliques in terms of Miscalibarion Area and three other cliques in terms of RMSE.

In the case of the CatBoost base model (Figure~\ref{fig:cd_miscal}), we see that there are three cliques in the term of Miscalibration Area.

\begin{figure}[H]
    \centering
\includegraphics[width=1\columnwidth]{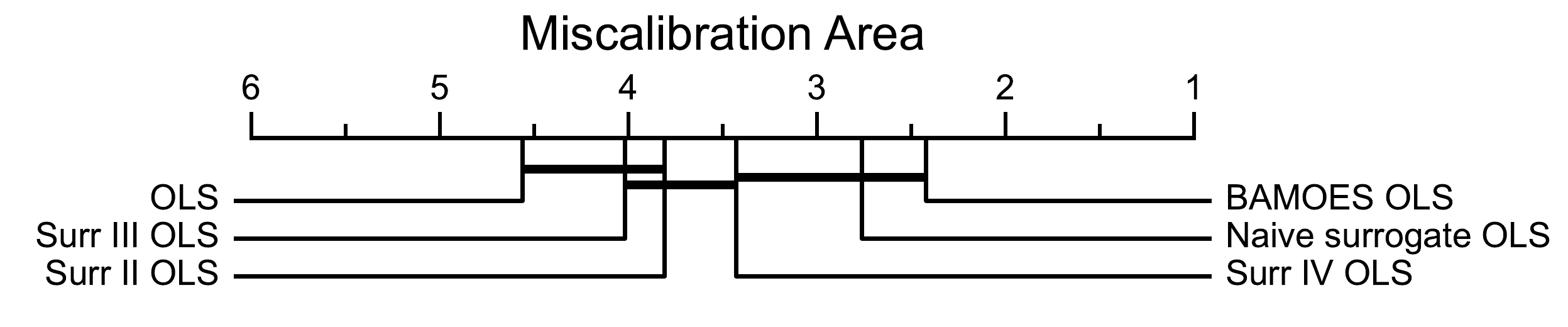}
    \caption{OLS base surrogate model comparison of Miscal.
Area on Forecasting data}
    \label{fig:surr_cd_ols_miscal}
\end{figure}

\begin{figure}[H]
    % \centering
    \begin{subcaptionblock}{1.\columnwidth}
        % \centering
        \includegraphics[width=1\columnwidth]{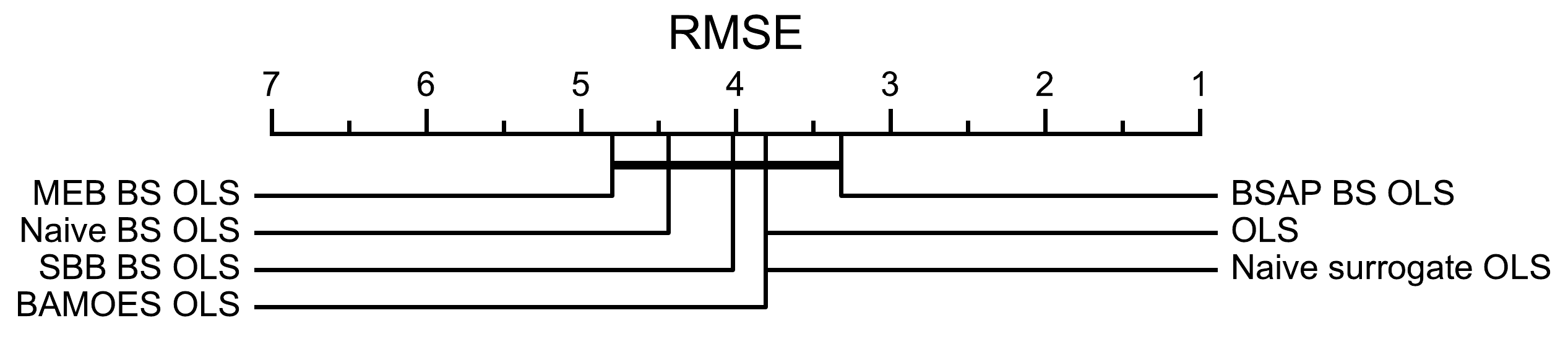}
    \end{subcaptionblock} \\ %
    \begin{subcaptionblock}{1.\columnwidth}
        % \centering
        \includegraphics[width=1\columnwidth]{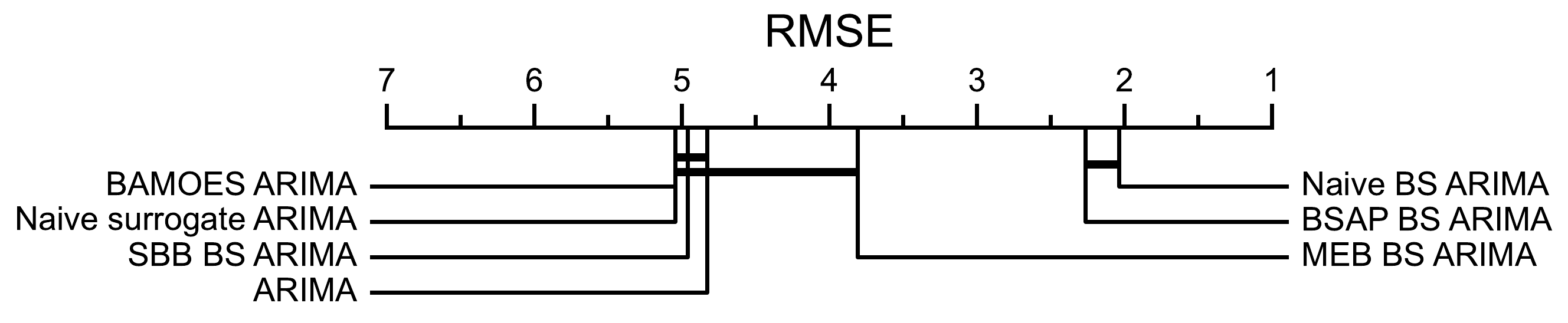}
    \end{subcaptionblock} \\ %
    % \begin{subcaptionblock}{1.\columnwidth}
    %     % \centering
    %     \includegraphics[width=1\columnwidth]{}
    % \end{subcaptionblock} \\ %
    % \subcaptionbox{}{
    %     \includegraphics[width=1\columnwidth]{figures/OLS_surr_cd-diagram_Miscalibration_Area.pdf}
    %   } \\
    % \subcaptionbox{}{
    %     \includegraphics[width=1\columnwidth]{figures/OLS_cd-diagram_RMSE.pdf}
    %   } \\
    % \subcaptionbox{}{
    %     \includegraphics[width=1\columnwidth]{figures/ARIMA_cd-diagram_RMSE.pdf}
    %   } \\
    % \subcaptionbox{}{
    %     \includegraphics[width=1\columnwidth]{figures/CatBoost_cd-diagram_RMSE.pdf}
    %   } 
    \caption{Model comparison of Miscalibration Area and RMSE on Forecasting data}
    \label{fig:cd_rmse}
\end{figure}

\end{document}